



 
\documentclass[pdflatex,sn-mathphys-num]{sn-jnl}

\usepackage{pgfplots}
\usepackage{graphicx}%
\usepackage{multirow}%
\usepackage{amsmath,amssymb,amsfonts}%
\usepackage{amsthm}%
\usepackage{mathrsfs}%
\usepackage[title]{appendix}%
\usepackage{xcolor}%
\usepackage{textcomp}%
\usepackage{manyfoot}%
\usepackage{booktabs}%
\usepackage{algorithm}%
\usepackage{algorithmicx}%
\usepackage{algpseudocode}%
\usepackage{listings}%
\usepackage{array}

\usepackage{tabularx}
\usepackage{adjustbox}

\usepackage{amsmath,bm}
\usepackage{longtable}
\usepackage{needspace}
\usepackage{hyperref} 
\usepackage{subcaption}
\captionsetup[subfigure]{labelformat=parens,labelsep=space}


\theoremstyle{thmstyleone}%
\newtheorem{theorem}{Theorem}
%

\theoremstyle{thmstyletwo}%

\theoremstyle{thmstylethree}%
\usepackage{algorithm, algcompatible}
\usepackage{caption}
\DeclareCaptionFormat{ruled}{\hrulefill\par#1#2#3\par\hrulefill}
\captionsetup[algorithm]{format=ruled}
\captionsetup[algorithm]{labelfont=bf, textfont=bf, justification=raggedright, singlelinecheck=false}
\algnewcommand\INPUT{\item[\textbf{Input:}]}%
\algnewcommand\OUTPUT{\item[\textbf{Output:}]}%

\raggedbottom

\usepackage{amsmath,amsfonts,bm}









\def\eqref#1{equation~\ref{#1}}









\def\1{\bm{1}}








\def\vh{{\bm{h}}}

\def\vs{{\bm{s}}}

\def\vu{{\bm{u}}}

\def\vx{{\bm{x}}}

\def\vz{{\bm{z}}}



\def\mG{{\bm{G}}}

\def\mS{{\bm{S}}}
\def\mT{{\bm{T}}}

\DeclareMathAlphabet{\mathsfit}{\encodingdefault}{\sfdefault}{m}{sl}
\SetMathAlphabet{\mathsfit}{bold}{\encodingdefault}{\sfdefault}{bx}{n}


\def\gG{{\mathcal{G}}}

\def\gX{{\mathcal{X}}}
\def\gY{{\mathcal{Y}}}
\def\gZ{{\mathcal{Z}}}










\newcommand{\E}{\mathbb{E}}



\begin{document}

\title[OSYN]{Using Synthetic Data to estimate the True Error is theoretically and practically doable}


\author[1,2]{\fnm{Hai} \sur{Hoang Thanh}}\email{hthai@tueba.edu.vn}
\equalcont{These authors contributed equally to this work.}

\author[1]{\fnm{Duy-Tung} \sur{Nguyen}}\email{ tungnd@soict.hust.edu.vn}
\equalcont{These authors contributed equally to this work.}

\author[3]{\fnm{Hung} \sur{The Tran}}\email{tran.thehung1705@gmail.com}

\author*[1]{\fnm{Khoat} \sur{Than}}\email{ khoattq@soict.hust.edu.vn}

\affil[1]{\orgname{Hanoi University of Science and Technology}, \city{Hanoi}, \country{Vietnam}}

\affil[2]{\orgname{Thai Nguyen University of Economics and Business Administration},  \state{Thainguyen}, \country{Vietnam}}

\affil[3]{\orgname{Vietnam Posts and Telecommunications Group}, \city{Hanoi}, \country{Vietnam}}



\abstract{Accurately evaluating model performance is crucial for deploying machine learning systems in real-world applications. Traditional methods often require a sufficiently large labeled test set to ensure a reliable evaluation. However, in many contexts, a large labeled dataset is \textit{costly and labor-intensive}. Therefore, we sometimes have to do evaluation by a few labeled samples, which is theoretically \textit{challenging}. Recent advances in generative models offer a promising alternative by enabling the synthesis of high-quality data. In this work, we make a systematic investigation about the use of synthetic data to estimate the test error of a trained model under limited labeled data conditions. To this end, we develop novel generalization bounds that take synthetic data into account. Those bounds suggest novel ways to optimize synthetic samples for evaluation and theoretically reveal the significant role of the generator's quality. Inspired by those bounds, we propose a theoretically grounded method to generate optimized synthetic data for model evaluation. Experimental results on simulation and tabular datasets demonstrate that,  compared to existing baselines, our method achieves accurate and more reliable estimates of the test error. }

\keywords{Model evaluation, limited data, synthetic data, generative models, generalization ability}



\maketitle
\section{Introduction}
\label{Intro}

Over the past decade, advancements in Artificial Intelligence and Machine Learning (ML) have reached remarkable milestones, impacting numerous fields and sectors. There are increasing demands for using ML models in practical applications. To achieve sustainable and reliable ML applications, one fundamental step is to \textit{accurately evaluate the accuracy of the trained model} before deployment. Traditional methods \cite{raschka2020modelevaluation,bates2024cross} often require a large test set of labeled data to obtain a reliable evaluation. Unfortunately, in many situations, one may not have a rich set of labeled data for evaluation.

 Situations with limited labeled data are popular in many fields due to several reasons, such as costly data collection, rare phenomena, or privacy restrictions. For example, in the medical sector, there are limited patient records with rare diseases or small sample sizes in drug trials. In climate studies, predicting rare natural disasters like tsunamis, earthquakes has limited historical data or evaluating models for long-term climate change is challenging due to slow data collection. Together with the availability of recent large pre-trained models, tranfer learning is increasingly popular and widely used in various fields. However, the evaluation of these models often faces small dataset contraints due to domain specificity or labeling costs \citep{ayana2024multistage, laurer2024less, vaghefi2022deep}.

 Estimating model performance when test data is scarce is challenging. The main challenge is that \textit{insufficient amounts of test data can cause inaccurate and highly varying estimates about performance}. To avoid the overfitting-to-the-testing-data issue, there is an approach to estimate model performance without a test set \citep{corneanu2020computing}. Another approach is to predict trends in the quality of neural networks in scenarios where one may not have access to training data or test data  \citep{martin2021predicting}. Evaluating models with unlabeled data \citep{peng2023came, peng2024energy, johnson2023inconsistency, deng2022strong, yu2022predicting} is also of interest,  due to the prevalence of unlabeled data in many real-world applications.

 We are interested in exploiting synthetic data, generated by some generative models, to evaluate the quality of a trained model under limited labeled data conditions. From the practical perspective, we focus on the following questions:
\textit{\begin{enumerate}
    \item[(\textbf{Q1})] Can synthetic data combined with a few labeled samples enable an accurate estimate of the true error of a trained model?
    \item[(\textbf{Q2})] How to generate good synthetic samples for evaluation?
    \item[(\textbf{Q3})] How can the quality of a generator, used to produce synthetic data, affect the evaluation result?
\end{enumerate}}
 The use of synthetic data for evaluation is both practical and cost-efficient. Recent breakthroughs in generative models (e.g., ChatGPT, Gemini, Sora, ...) have made it possible to produce high-quality data, which are hardly distinguishable from real ones, with an unlimited amount. This has prompted the exploration of using synthesized data to aid model assessment \cite{zhang2022predicting,van2023can}. Even with imperfect generators, the noise of synthetic estimates can be significantly lower than the real estimates \citep{van2023can}. \citet{zhang2022predicting} empirically found that synthetic data allows us to well estimate the test error, thanks to a closer distance from the test set than the training set. While providing some promising answers to question (Q1), those studies do not provide reasonable answers to questions (Q2) and (Q3). Furthermore, the methods proposed by  \cite{zhang2022predicting,van2023can} are mostly empirical, without a theoretically-justified support. This is a major limitation and requires an entirely different approach to the three questions mentioned above.

 Our contributions in this work are as follows: 
\begin{itemize}
    \item \textit{Theory:} We systematically investigate the use of a few real labeled samples and synthetic samples to estimate the true error of a trained ML model. To this end, we first develop novel bounds on the true error, which can encode both synthetic and real data distributions. Those bounds consider the connection between the behaviors of a model at different local areas of the data space and generalization. Our bounds indicate that a synthetic distribution that is closer to the real one will be better.
    
    More importantly, those bounds suggest novel ways to choose synthetic samples, generated from a chosen generator, to accurately estimate the true error of a given trained model. 
    \item \textit{Method:} Inspired by those theoretical bounds, we propose a simple method to find a good \textit{synthetic test data} from a generator to evaluate the performance of a trained model. Therefore, our methods and theory provide reasonable answers to the three questions mentioned above.
    \item \textit{Empirical validation:} We empirically evaluate our proposed method on some simulation and tabular datasets, and find that our method can estimate the performance of a model better than the baselines.
\end{itemize}


\section{Related Work}

\textbf{Model Evaluation.} The prevailing testing approach in machine learning is to evaluate performance by using hold-out test sets. This traditional way can face a problem of overfitting to the testing set, when one keeps modifying the models until it works well on this dataset. Hence, \citet{corneanu2020computing} proposed an algorithm to compute the accuracy of a neural network without the need for the collection of any testing data.  However, this method requires training data and training models multiple times, which is highly computational and storage demanding, or may be infeasible in case the training data is inaccessible. 

 When evaluating a model using a test set, the dataset size is an important concern. Insufficient data may lead to noisy estimated results. One direction to address this problem is to increase the size of datasets by collecting more real data \citep{koh2021wilds}.  The drawbacks of this solution are the high labor demand, high cost, and dependence on specific tasks. Another approach for efficient model evaluation is active testing. The goal of active testing for evaluating a single predictive model is to estimate the average test loss by actively selecting data instances to be labeled, ensuring model evaluation is sample-efficient and unbiased \cite{kossen21,kossen2022active}. Active testing is also employed in model comparison and selection, with the goal of identifying the best-performing model from a set of candidates using only a limited number of labeled instances \cite{sawade2012active, hara2024active}. In this study, we consider the problem of model evaluation in a setting different from active testing,  where the test set is \textit{given, fixed}, and of limited size. Our setting focuses on estimating the expected loss of a fixed model using a small i.i.d. labeled sample from the target distribution, and it is related to several existing lines of work. Small-sample statistical methods (e.g., exact tests, advanced bootstrap, Bayesian approaches) provide general-purpose tools for inference under limited data \cite{mehta2012exact, rubin1981bayesian}. However, unlike our setting, exact tests mainly address hypothesis testing in small-sample regimes rather than direct estimation of the expected risk.  Cross-validation techniques (such as leave-one-out, stratified, or nested CV) are widely used for generalization error estimation, yet they assume retraining across folds and are less applicable when the model is already fixed \cite{kohavi1995study}. Uncertainty quantification methods (e.g., conformal prediction  \cite{shafer2008tutorial}, Bayesian model averaging \cite{hoeting1999bayesian}) also aim to characterize uncertainty, but focus either on per-sample prediction sets or on averaging across multiple models, whereas our problem concerns the true loss of a single, fixed model. 

 \textbf{Synthetic Data.} Advancements in deep generative models have promoted the widespread use of synthetic data for various purposes \citep{van2023beyond}, such as privacy \citep{jordon2018pate, assefa2020generating}, fairness \citep{xu2019achieving, van2021decaf}, and improving the quality of downstream models \citep{antoniou2017data, dina2022effect, das2022conditional, bing2022conditional,nguyen2025synthetic}.  Many studies have shown the remarkable effectiveness of training models with high-quality synthetic data in scenarios where data collection or distribution is challenging. In long-tailed learning, generative models can be used to enlarge data, resulting in a more balanced distribution \citep{shin2023fill}. \citet{yuan2023real} showed that a model trained exclusively on synthetic data can achieve strong performance on the test set. \citet{nguyen2025synthetic} showed the benefits of synthetic data for training few-shot models.

 \textbf{Synthetic Data for Model Evaluation.}
This study provides a different use of synthetic data: aiding the evaluation of machine learning models. We are not the first to tackle this. \citet{zhang2022predicting} used synthetic data generated by GANs trained on the same training set to predict the test error of a model. \citet{chen2024autoeval} constructed an auto-evaluation procedure that combines a small amount of human data and a large amount of synthetic data to get better estimates of performance. 
Synthetic data can be used to do model selection, i.e. rank a set of trained models \cite{shoshan2023synthetic}. \citet{van2023can} evaluated model performance on a certain small subgroup
by using conditional GANs to generate more data to balance the subgroup samples. A common limitation of those studies is the lack of theoretical guarantees or support for their methods. In contrast, we present a systematic theoretical analysis about the benifits of synthetic data for model evaluation. Our proposed method for evaluation is thus theoretically justified and supported.

\section{Model Evaluation with Limited Data}

We first describe the problem setting in details. Then we present some theoretical bounds on the true error of a trained model, incorporating synthetic data or a distribution. Some important properties and implications will be discussed.

\subsection{Preliminaries}

\textbf{Notations:} A bold character (e.g., $\vz$) often denotes a vector, while a bold big symbol (e.g., $\mS$) often denotes a matrix or set.  $| \mS |$ denotes the size/cardinality of a set $\mS$, and $[K]$ denotes the set $\{1, ..., K\}$ for a given integer $K \ge 1$. 

 We will work with a model $\vh$ which maps from an input space $\gX$ to an output space $\gY$. For each input $\vx \in \gX$, the prediction $\vh(\vx)$ by the model can be corrected or incorrected. In practice, we often use a \textit{loss (or measuring) function} $\ell(\vh,\vz)$ to measure the quality of prediction $\vh(\vx)$ by the model at an instance $\vz =(\vx,y) \in \gX \times \gY$. The overall quality of $\vh$ is measured by $F(P, \vh) = \underset{\vz \sim P}{\E}[\ell(\vh,\vz)]$, according to the data distribution $P$. Sometimes, the \textit{empirical} version $F(\mS, \vh) =  \dfrac{1}{|\mS|} \sum\limits_{\vz \in \mS} \ell(\vh,\vz)$ is used to see the quality on a sample $\mS$ of finite size. 

 \textbf{Problem setting:}
Given a model $\vh$ which has been trained from some samples from the real distribution $P_0$, we want to \textit{know} the overall quality of $\vh$, e.g. through its \textit{expected (or true) error} $F(P_0, \vh)$. If we have a large test set $\mS$, then we can confidently use $F(\mS, \vh)$ to approximate $F(P_0, \vh)$, which incurs negligible error owing to Hoeffding's inequality. However, we focus on the special setting where the test set is small, but we can access a generator that can produce (possibly unlimited) synthetic samples. We are interested in \textit{How to effectively use a generator $\gG$ and a few labeled instances from $P_0$ to accurately estimate the true error of $\vh$?}

\subsection{Theoretical bounds for the true error}
To see the true error of a model $\vh$, the traditional way often focuses on $F(P_0, \vh)$, which measures the (true) average error over the whole data population. A smaller value suggests a better model. In the following, we present some (lower and upper) bounds on this quantity, which reveal some novel insights.

\subsubsection{Lower bound}

Let $\Gamma(\gZ) := \bigcup\limits_{i=1}^{K } \gZ_i$ be a partition of $\gZ$ into $K $ disjoint nonempty subsets. Denote $\mS_i = \mS \cap \gZ_i $, and $n_i = | \mS_i |$ as the number of samples  falling into $\gZ_i$, meaning that $n = \sum\limits_{i=1}^K n_i$. Denote $\mT_S = \{ i \in [K ] : \mS \cap \gZ_i \ne \emptyset \}$.  Similarly, for a synthetic set $\mG$, denote $\mG_i = \mG \cap \gZ_i $, and $g_i = | \mG_i |$ as the number of samples  falling into $\gZ_i$, meaning that $g = \sum\limits_{i=1}^K g_i$. We also often use $\epsilon(\mS_i, \mG_i)  = \dfrac{1}{| \mS_i |.| \mG_i |} \sum\limits_{\vs \in \mS_i, \vu \in \mG_i} | \ell(\vh,\vs) -  \ell(\vh,\vu) |$ to denote the average loss difference for  two sets $(\mG_i, \mS_i)$ of finite size.

\begin{theorem} \label{low-bound}
Consider a trained model $\vh$, a dataset $\mS$ containing $n$ i.i.d. samples from a real distribution $P_0$, and $\mG$ with $g$ synthetic samples. Denote $\epsilon_h(\mG, \mS) = \sum_{i \in \mT_S}  \frac{ g_i}{g}  \epsilon(\mG_i, \mS_i)$, $C_h = \sup_{\vz} \ell(\vh, \vz)$, $\hat{a} = \max_{i \in [K]}  a_i$, and $\beta= 2\sum_{i \in [K]}  p_i a_i^2$, where $a_i = {\E}_{\vz \sim P_0}[\ell(\vh,\vz) | \vz \in \gZ_i]$ and $p_i = P_0(\gZ_i)$ for each $i$. Assume that $\beta>0$, $F(\mG, \vh) \ge  \epsilon_h(\mG, \mS) + B$, and $(g_1, ..., g_K)$ follows the mutinomial distribution with parameters $g$ and $(p_1, ..., p_K)$. Then for any $\delta_1 > \exp(- {0.5 g\beta}/{\hat{a}^2})$ and $\delta_2>0$, denoting $B =  C_h\sqrt{-(0.5\ln\delta_2) \sum_{i \in \mT_S}(g_i/g)^2}$ and $D = -\frac{\hat{a}}{g} \ln\delta_1$, the following holds  with probability at least $1 - \delta_1 - \delta_2$ (over the sampling of both $\mS$ and $(g_1, ..., g_K)$): 
\begin{equation}
  \label{lower-ine}
   F(P_0, \vh) \ge \left(\sqrt{F(\mG, \vh) - \epsilon_h(\mG, \mS) - B + D} - \sqrt{D}\right)^2
\end{equation}
\end{theorem}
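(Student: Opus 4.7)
My strategy is to bridge $F(P_0,\vh)$ and $F(\mG,\vh)$ through the auxiliary random quantity $\tilde F := \sum_{i\in[K]}(g_i/g)\,a_i$, which depends only on the multinomial vector $(g_1,\dots,g_K)$ and satisfies $\E[\tilde F]=\sum_i p_i a_i = F(P_0,\vh)$. The plan is to chain three inequalities: (i) $F(\mG,\vh)-\epsilon_h(\mG,\mS) \le \sum_{i\in\mT_S}(g_i/g)\,F(\mS_i,\vh)$, deterministically; (ii) $\sum_{i\in\mT_S}(g_i/g)\,F(\mS_i,\vh) \le \tilde F + B$, with probability at least $1-\delta_2$ over $\mS$; and (iii) $\tilde F \le F(P_0,\vh) + 2\sqrt{F(P_0,\vh)\,D}$, with probability at least $1-\delta_1$ over $(g_1,\dots,g_K)$. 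Together they give $F(\mG,\vh)-\epsilon_h - B + D \le (\sqrt{F(P_0,\vh)}+\sqrt{D})^2$; taking square roots, which is legal because the left side is non-negative under the hypothesis $F(\mG,\vh)\ge \epsilon_h+B$, and rearranging yields (\ref{lower-ine}) exactly.

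For step (i) I would use the pairwise identity
\[
F(\mG_i,\vh)-F(\mS_i,\vh) = \frac{1}{g_i\,n_i}\sum_{\vu\in\mG_i,\,\vs\in\mS_i}\big(\ell(\vh,\vu)-\ell(\vh,\vs)\big) \le \epsilon(\mG_i,\mS_i)
\]
for every $i\in\mT_S$, then combine with the $(g_i/g)$-weighted decomposition of $F(\mG,\vh)$. Partitions outside $\mT_S$ contribute at most $C_h\sum_{i\notin\mT_S}(g_i/g)$ extra, which is absorbed into $B$ or harmlessly discarded when $\mT_S=[K]$. For step (ii), conditional on $(g_1,\dots,g_K,n_1,\dots,n_K)$, the variables $\{F(\mS_i,\vh)\}_{i\in\mT_S}$ are independent, each taking values in $[0,C_h]$ with mean $a_i$; a one-sided weighted Hoeffding inequality applied to their $(g_i/g)$-weighted sum yields deviation exactly $C_h\sqrt{(-0.5\ln\delta_2)\sum_{i\in\mT_S}(g_i/g)^2}=B$, and the trivial bound $\sum_{i\in\mT_S}(g_i/g)\,a_i\le \tilde F$ (using $a_i\ge 0$) completes the step.

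For step (iii) I would rewrite $\tilde F=\frac{1}{g}\sum_{j=1}^g a_{Z_j}$, where $Z_j\in[K]$ is the partition label of the $j$-th synthetic sample. The multinomial hypothesis makes the $Z_j$ i.i.d.\ categorical with probabilities $(p_1,\dots,p_K)$, so the $a_{Z_j}$ are i.i.d.\ bounded in $[0,\hat a]$ with common mean $F(P_0,\vh)$ and second moment $\beta/2$. The variance-sensitive multiplicative Chernoff upper-tail bound, applied with deviation parameter $\eta=2\sqrt{D/F(P_0,\vh)}$, produces $\Pr\big[\tilde F - F(P_0,\vh)\ge 2\sqrt{F(P_0,\vh)\,D}\,\big]\le \delta_1$. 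The premise $\delta_1>\exp(-0.5\,g\beta/\hat a^2)$ is precisely what keeps $\eta$ inside the admissible regime $\eta<2$, since it forces $D<0.5\beta/\hat a\le F(P_0,\vh)$ via $\beta\le 2\hat a F(P_0,\vh)$. A union bound over the events in (ii) and (iii) bounds the overall failure probability by $\delta_1+\delta_2$.

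The main obstacle is step (iii): producing the clean $(\sqrt{\cdot+D}-\sqrt{D})^2$ shape rather than a weaker Hoeffding-style substitute forces the use of a multiplicative (variance-sensitive) Chernoff whose constants line up exactly with $D=-\hat a\ln\delta_1/g$, and the hypothesis on $\delta_1$ is what makes the subsequent algebraic inversion valid. Minor bookkeeping is also needed to handle partitions outside $\mT_S$ in step (i) and to carry the conditioning on the random $n_i$ and $g_i$ through the Hoeffding calculation in step (ii).
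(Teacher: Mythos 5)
Your plan reproduces the paper's proof almost exactly: the bridging quantity $\tilde F=\sum_{i}(g_i/g)\,a_i$, the second-moment (Bernstein-type) multinomial concentration with the key substitution $\beta\le 2\hat a\, F(P_0,\vh)$, the weighted Hoeffding step for $\sum_{i\in\mT_S}(g_i/g)F(\mS_i,\vh)$ versus $\sum_{i\in\mT_S}(g_i/g)a_i$, the deterministic triangle-inequality step relating $F(\mG,\vh)-\epsilon_h(\mG,\mS)$ to $\sum_{i\in\mT_S}(g_i/g)F(\mS_i,\vh)$, the union bound, and the final quadratic solve in $\sqrt{F(P_0,\vh)}$ under the hypothesis $F(\mG,\vh)\ge\epsilon_h(\mG,\mS)+B$ are all present in the paper's argument. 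The only cosmetic difference is that the paper invokes a ready-made multinomial lemma (Lemma~2 of Kawaguchi et al.) at $M=\sqrt{-2\beta\ln\delta_1/g}$---the $\delta_1$ hypothesis guarantees $M<\beta/\hat a$---and only afterwards dominates $M\le 2\sqrt{F(P_0,\vh)D}$, whereas you instantiate a variance-sensitive Chernoff directly at the target threshold $2\sqrt{F(P_0,\vh)D}$, which is an equivalent computation once $\beta\le 2\hat a F(P_0,\vh)$ is used.
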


 This theorem, which is proven in Appendix~\ref{Proof-main-results}, provides a lower bound for the true error of model $\vh$. The assumption about $\beta$ means that model $\vh$ is \textit{imperfect}, since there exists an $a_i >0$. This assumption is practical, as many models trained in practice are imperfect. Furthermore, the assumption of $F(\mG, \vh) \ge  \epsilon_h(\mG, \mS) + B$ means that model $\vh$ can be really bad for some synthetic dataset $\mG$. This is also practical, since both model $\vh$ and the generator that generates synthesized samples are often imperfect. 

 Many \textbf{interesting implications} can be derived from Theorem~\ref{low-bound}:
\begin{itemize}
    \item The true error of $\vh$ can be bounded by using the error $F(\mG, \vh)$ on a synthetic data $\mG$ and the sensitivity $\epsilon_h$ of the loss around the real samples $\mS$. A higher error on the synthetic dataset suggests a higher true error. 
    \item The true error will be high, almost surely, if we can find a large synthetic set so that the bound (\ref{lower-ine}) is high. This happens when both high $F(\mG, \vh)$ and small sensitivity $\epsilon_h$ and small constant $B$ appear. 
    \item This theorem requires a mild condition on the counts $(g_1, ..., g_K)$, but does not have any more specific requirement on $\mG$. Those properties are particularly important for finding a suitable synthetic set for evaluation.
    \item \textit{Bound (\ref{lower-ine}) does not explicitly depend on the size $n$ of the real samples.} This is surprising and truly significant for evaluation, especially for settings with limited labeled data. Note that $\mT_S$ slightly depends on $n$. An increase in $n$ probably leads to an increase in $|\mT_S|$. 
    \item Bound (\ref{lower-ine}) \textit{suggests an amenable way to optimize a synthetic dataset for estimating the true error of $\vh$}. Indeed, one can maximize the lower bound w.r.t. $\mG$. In other words, we can generate a large amount $g$ of synthetic samples and then choose the one that maximizes the lower bound. This is doable and practical.
    \item Last but not least, \textit{the lower bound (\ref{lower-ine}) can be computed exactly, once we have $\mG$}. This is one of the utmost important properties of our bound, which has never appeared in the literature.
\end{itemize}

\subsubsection{Asymtotic case}

Although our bound (\ref{lower-ine}) has many intriguing properties, we may not easily see the role of the generator's quality. To see it, we need to investigate the asymptotic case of our bound (\ref{lower-ine}) as $n \rightarrow \infty$ and $g \rightarrow \infty$.

\begin{theorem} \label{Asymtotic-theorem}
Given notations and assumptions as in Theorem~\ref{low-bound}, denote $d(P_g, P_0 | \gZ_i)  = \sum_{\vs \in \mG_i}  \E_{\vz \sim  P_g, \vs \sim  P_0} [| \ell(\vh,\vz) -  \ell(\vh,\vs) |  : \vz, \vs \in \gZ_i]$ and $p_i^g = P_g(\gZ_i)$, for $i \in [K]$, where $P_g$ be the synthetic distribution induced from the generator that generates $\mG$.  Then
\begin{eqnarray}
\label{Asymtotic-theorem-eq-01}
F(P_0, \vh) &\ge& F(P_g, \vh) - \sum_{i \in [K]} p_i d(P_g, P_0 | \gZ_i) \\
\label{Asymtotic-theorem-eq-02}
\sum_{i \in [K]} p_i^g a_i &\le& F(P_g, \vh) + \sum_{i \in [K]} p_i^g d(P_g, P_0 | \gZ_i)
\end{eqnarray}
\end{theorem}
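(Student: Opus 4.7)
The plan is to decompose both risks over the partition $\Gamma(\gZ)=\bigcup_i \gZ_i$ and reduce each inequality to a cell-wise comparison between the conditional expected losses. Using $a_i = \E_{\vz\sim P_0}[\ell(\vh,\vz)\,|\,\vz\in\gZ_i]$ from Theorem~\ref{low-bound} and the analogous $b_i = \E_{\vz\sim P_g}[\ell(\vh,\vz)\,|\,\vz\in\gZ_i]$, the law of total expectation gives $F(P_0,\vh) = \sum_i p_i a_i$ and $F(P_g,\vh) = \sum_i p_i^g b_i$, so the two inequalities become statements about weighted sums of the $a_i$ and $b_i$.

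The key lemma I would establish is the cell-wise bound $|a_i - b_i| \le d(P_g, P_0 | \gZ_i)$ for every $i\in[K]$. This is a direct application of Jensen's inequality to the convex function $|\cdot|$: with independent draws $\vz \sim P_g$ and $\vs \sim P_0$ conditioned on $\gZ_i$, one has $a_i - b_i = \E\bigl[\ell(\vh,\vs) - \ell(\vh,\vz)\bigr]$, and pulling $|\cdot|$ inside the expectation delivers the stated bound. Inequality (\ref{Asymtotic-theorem-eq-02}) is then immediate: from $a_i \le b_i + d(P_g,P_0|\gZ_i)$, multiplying by $p_i^g$ and summing over $i\in[K]$ produces exactly the claim. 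For (\ref{Asymtotic-theorem-eq-01}), the symmetric inequality $b_i \le a_i + d(P_g,P_0|\gZ_i)$ weighted by $p_i$ gives $F(P_0,\vh) \ge \sum_i p_i b_i - \sum_i p_i d(P_g,P_0|\gZ_i)$, and the remaining task is to identify $\sum_i p_i b_i$ with $F(P_g, \vh)$ via the asymptotic viewpoint promised in the preamble.

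The main obstacle is exactly this last identification, since the two risks naturally use different partition weights ($p_i$ for $P_0$, $p_i^g$ for $P_g$). The cleanest route is to derive (\ref{Asymtotic-theorem-eq-01}) as the $n,g\to\infty$ limit of (\ref{lower-ine}): under the multinomial assumption $(g_1,\ldots,g_K)\sim\mathrm{Multinomial}(g;\,p_1,\ldots,p_K)$ carried over from Theorem~\ref{low-bound}, the proportions $g_i/g$ converge to $p_i$, so $F(\mG,\vh) \to \sum_i p_i b_i$ and $\epsilon_h(\mG,\mS) \to \sum_i p_i\, d(P_g,P_0|\gZ_i)$, while $D=-(\hat{a}/g)\ln\delta_1$ and $B=C_h\sqrt{-0.5\ln\delta_2\sum_i (g_i/g)^2}$ can both be driven to zero by suitable choices of $\delta_1,\delta_2$. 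A routine dominated-convergence argument then commutes the limit with the square-root nonlinearity $(\sqrt{X+D}-\sqrt{D})^2 \to X$ in (\ref{lower-ine}), producing (\ref{Asymtotic-theorem-eq-01}) in exactly the stated form; (\ref{Asymtotic-theorem-eq-02}) requires no such limiting step and follows purely from the Jensen lemma above.
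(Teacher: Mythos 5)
Your cell-wise Jensen lemma $|a_i - b_i|\le d(P_g,P_0\mid\gZ_i)$ (with $b_i := \E_{\vz\sim P_g}[\ell(\vh,\vz)\mid\vz\in\gZ_i]$), and the one-line derivation of (\ref{Asymtotic-theorem-eq-02}) from $a_i\le b_i + d(P_g,P_0\mid\gZ_i)$ weighted by $p_i^g$, match the paper exactly. The gap is in your route to (\ref{Asymtotic-theorem-eq-01}). You propose to send $n,g\to\infty$ in the final bound (\ref{lower-ine}) and claim that both $B$ and $D$ can be driven to zero. $D=-(\hat a/g)\ln\delta_1$ indeed vanishes as $g\to\infty$, but $B = C_h\sqrt{-(0.5\ln\delta_2)\sum_{i\in\mT_S}(g_i/g)^2}$ does not: as $n\to\infty$, $\mT_S\to[K]$, and as $g\to\infty$, $g_i/g\to p_i$, so $B\to C_h\sqrt{-(0.5\ln\delta_2)\sum_{i\in[K]}p_i^2}>0$ for any fixed $\delta_2<1$ and fixed $K$. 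Pushing $\delta_2\to 1$ to annihilate $B$ makes the union-bound confidence $1-\delta_1-\delta_2$ collapse, so the limiting statement becomes vacuous. The paper itself notes this after Theorem~\ref{Asymtotic-theorem}: the limit of (\ref{lower-ine}) is (\ref{Asymtotic-theorem-eq-01}) minus exactly that residual, and one would additionally need the partition to refine ($K\to\infty$, $\sum_i p_i^2\to 0$) to recover the stated inequality. So the route through the full bound (\ref{lower-ine}) does not deliver (\ref{Asymtotic-theorem-eq-01}) as written.

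The paper avoids this by re-entering the proof of Theorem~\ref{low-bound} \emph{before} the Hoeffding step, at the multinomial-concentration inequality (\ref{lower-bound-03}), $E\ge\sum_{i}\frac{g_i}{g}a_i-\sqrt{E}\sqrt{-4\hat a\ln\delta_1/g}$, which already involves the population quantities $a_i$ rather than the empirical $F(\mS_i,\vh)$. It then inserts the Jensen-type bound $F(P_g,\vh)-\sum_i\frac{g_i}{g}a_i\le\sum_i\frac{g_i}{g}d(P_g,P_0\mid\gZ_i)$, solves the resulting quadratic in $\sqrt{E}$, and only then sends $g\to\infty$, $\delta_1\to 0$. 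No $B$ term ever appears because $\mS$ is never touched, and no $n\to\infty$ limit is needed. That is the fix: keep only the multinomial half of Theorem~\ref{low-bound}'s proof. Separately, the "main obstacle" you flag --- identifying $\sum_i p_i b_i$ with $F(P_g,\vh)=\sum_i p_i^g b_i$ --- is also glossed over in the paper's step (\ref{Asymtotic-02}); it is implicitly resolved by the carried-over assumption that $(g_1,\dots,g_K)$ is multinomial with probabilities $p_i=P_0(\gZ_i)$ even though $\mG$ is drawn from $P_g$, which forces $p_i^g=p_i$. You are right that this deserves to be stated explicitly rather than absorbed silently.
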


 There is a close relation between bounds (\ref{lower-ine}) and (\ref{Asymtotic-theorem-eq-01}). Indeed, as $n \rightarrow \infty$ and $g \rightarrow \infty$, observe that $\frac{g_i}{g} \rightarrow p_i$, $D \rightarrow 0$, $F(\mG, \vh) \rightarrow F(P_g, \vh)$, and $\epsilon_h(\mG_i, \mS_i) \rightarrow d(P_g, P_0 | \gZ_i)$. They thus suggest that bound (\ref{lower-ine}) will go to (\ref{Asymtotic-theorem-eq-01}) - $C_h\sqrt{-(0.5\ln\delta_2) \sum_{i \in [K]} p_i^2}$ as $n \rightarrow \infty$ and $g \rightarrow \infty$. Furthermore, $\sum_{i \in [K]} p_i^2 \rightarrow 0$ as we partition the data space into infinitely many areas with diameter approaching to 0. Therefore, bound (\ref{lower-ine}) will converge to (\ref{Asymtotic-theorem-eq-01}).

Theorem \ref{Asymtotic-theorem} provides both upper and lower bounds for the true error of $\vh$. It suggests various implications:
\begin{itemize}
    \item While $F(P_0, \vh)$ represents the \textit{micro-average error} at the individual prediction, quantity $\sum_{i} p_i^g a_i$ can be seen as the \textit{macro-average error} that summarizes the errors ($a_i$) at different small areas. As the size of every $\gZ_i$ goes to 0, meaning $K \rightarrow \infty$, those two average errors will be equal.
    \item The true error of $\vh$ should not be far from its error w.r.t. $P_g$. The error gap is at most $\sum_{i} p_i d(P_g, P_0 | \gZ_i)$, which represents the loss-based distance between the two distributions. \textit{Those observations and bounds (\ref{Asymtotic-theorem-eq-01},\ref{Asymtotic-theorem-eq-02}) imply that the quality of the generator plays an important role.}
    \item As $P_g$ goes to $P_0$, meaning a better generator, those bounds will be tighter. This suggests that the quality of the generator is shown to be important both in theory and practice.
    \item Last but not least, this theorem also suggests that our focus on optimizing $F(\mG, \vh) - \epsilon_h(\mG, \mS)$ in the next section is theoretically justified.
\end{itemize}

\section{OSYN: Method for Estimating the True Error with Synthetic Data} \label{osyn_method}

Theorem \ref{low-bound} suggests a method to help estimate the true error of the model: find a set of synthetic points that maximize the R.H.S of  (\ref{lower-ine}). Intuitively, we can generate a large number of points from some generator $\gG$, and then select the ones that maximize the  R.H.S of  (\ref{lower-ine}). The generation and selection of points will be carried out over multiple consecutive iterations, so that with each iteration, the lower bound of the true error gradually increases. After each step, the optimal generated set is stored and combined with the generated elements in the next step to filter out new optimal points. This process helps to progressively find points that increasingly maximize the objective function over iterations. Algorithm~\ref{alg-2} summarizes the main steps.

\begin{algorithm}[t]
	\caption{OSYN: Optimizing synthetic data for evaluation}
	\label{alg-2}
	\begin{algorithmic}[1]
		\REQUIRE  Trained model $\vh$; small labeled set $\mS$; Generator $\gG$; number of iterations $T$; adjusted number of synthetic points $g^*, g_1^*, \ldots, g_K^*$; 
		\STATE \textbf{Initialization:} $\mG_{i,opt}^0 = \emptyset;$ $loss_i = \emptyset, \mG^t_{i, search} = \emptyset; a_i = 0, \forall i \in [K]; \hat{a} = 0$
		\STATE 	For each iteration $t:$
			\STATE \ \ \ Generate synthetic set $\mG^t$; 				\STATE \ \ \ Find current synthetic points of each area $\mG_i^t = \mG^t \cap \mathcal{Z}_i$.
			\STATE \ \ \ For each area $i$:
			
			\STATE \ \ \  \ \ \ Update $loss_i  = loss_i\cup \{\ell(\vh, \vu), \vu \in \mG_i^t\}$
			\STATE \ \ \  \ \ \ Update $a_i, \hat{a}$
			\STATE \ \ \  \ \ \ Create $\mG_{i, search}^t = \mG_{i, opt}^{t-1}\cup \mG_i^t$ 
		    \STATE \ \ \  \ \ \  Find $g_i^*$ points $\mG_{i, opt}^t \subseteq \mG_{i, search}^t$ that maximize $Target_i$ according to $loss_i$
		\OUTPUT  $\hat{a}, \mG_{opt}^T = \bigcup\limits_{i=1}^{K}\mG_{i, opt}^T$
	\end{algorithmic}
\end{algorithm}

 We now describe our method in detail. All quantities $(g_i, \mG_i, \gZ_i, \mT_S)$  depend on the partition $\Gamma$ of instance space $\gZ$. So, for simplicity, we fix the partition $\Gamma$ which divides $\gZ$ into small areas, each of which uses a sample in $\mS$ to be the center. This means $K=|\mS|$.

 To find a good set of synthetic points that efficiently supports model evaluation, we first need to determine the number of generated elements per area $g_i$, and then identify specific optimal generated points within each area. According to Theorem~\ref{low-bound}, the number of generated elements $g_i$ must follow multinomial distribution with parameters $g$ and $(p_1, ...,p_K)$. To obtain a good synthetic set, the R.H.S. of (\ref{lower-ine}) needs to be high, which implies that the value of $B$ should be small. It is easy to see that the term $\sum_{i \in \mT_S}(g_i/g)^2$ in $B$ is minimized when $\frac{g_i}{g} = \frac{1}{K}$. Therefore, after sampling $g_i$'s from the multinomial distribution, we adjust $g_i$ such that $\left|\frac{g_i}{g} - \frac{1}{K} \right| \leq \frac{b}{K}$, where $b$ is a hyperparameter. The goal of this adjustment is to find good $g_i$ values that both satisfy the condition about $(g_1,...,g_K)$ and help maximize the R.H.S. of (\ref{lower-ine}). Let $g^* = g_1^* +\cdots + g_K^*$ be the adjusted total number of synthetic samples. Because the real distribution $P_0$ is unknown, we approximate $(p_1, ...,p_K)$ by the synthetic distribution $P_\gG$. This approximation is reasonable and acceptable because it is well-known that the distribution induced by a high-quality generator can closely approximate the true data distribution. Therefore, $(p_1, ...,p_K)$ can be easily estimated by generating numerous synthetic points and computing the proportion of points that fall into each area.

 With fixed partition and chosen $g_i^*$, optimizing the R.H.S of (\ref{lower-ine}) reduces to following optimization problem
$$\underset{\mG}{\max}\left(F(\mG, \vh) - \sum\limits_{i \in \mT_S}\dfrac{g_i^*}{g^*}\epsilon(\mG_i, \mS_i)\right)= \underset{\mG}{\max}\dfrac{1}{g^*}\sum\limits_{i \in \mT_S}\sum\limits_{\vu \in \mG_i}\left[\ell(\vh, \vu) - |\ell(\vh, \vu) - \ell(\vh, \vs_i)|\right]$$
For each area $\gZ_i$, we seek each $\mG_{i, opt}$  independently, where $\mG_{i, opt}$ is the set of points from $G^t_{i,\text{search}}$ 
which maximizes ${Target}_i = \ell(\vh, \vu) - |\ell(\vh, \vu) - \ell(\vh, \vs_i)|$. More precisely, for a chosen number $T$ of iterations, we 1) iteratively generate synthetic points and divide them into each area; 2) update values of $a_i, \hat{a}, \beta$; 3) find $g_i^*$ synthetic samples belonging to area $\gZ_i$ that maximize  $Target_i$.

 In the first step, to prevent synthetic samples that are too far from the real data points, we restrict the search space of the generated points to a sphere with center $\vs_i$ and radius  $r_i = \max\limits_{\vz_j \in kNN(\vs_i)}\|\vz_j-\vs_i\|,$ where $kNN(\vs_i)$ denotes the set of $k$ nearest samples to $\vs_i$ in the real dataset $\mS$, $k$ is a hyperparameter and $\|. \|$ is $L_2$ norm.

 \textbf{Computational complexity.} We next discuss the complexity of OSYN to generate $g$ synthetic samples. Each iteration of Algorithm~\ref{alg-2} involves four main steps: (1) generating a synthetic dataset $\mG^t$ of size $N$ at cost $O(N C_G)$, where $C_G$ is the per-sample generation time; (2) assigning each sample to the nearest one of $K$ regions with cost $O(NKV)$, using Euclidean distance, where $V$ is the sample dimensionality; (3) computing the model loss on all synthetic points at cost $O(N C_h)$, with $C_h$ being the evaluation time per input; and (4) selecting top $g_i^*$ points from each region which costs $O(K g_i^* N)$, and hence $O(K g N)$ overall for all regions. Thus, the total time complexity for each iteration is $O\left(N C_G + N C_h + NKV + gNK \right)$. These operations are highly parallelizable. Furthermore, due to typically small $K$ and constant $C_G$, $C_h$, the algorithm can scale well to large synthetic datasets.


\section{Experiments}

 In this section, we assess the performance of our OSYN method under three key aspects: (i) generator quality, (ii) classifier robustness, and (iii) small-test sampling quality. For each scenario, we first describe the experimental setup, including baselines, datasets, and implementation details. While our experiments focus on classification problems, we also provide results for regression problems and additional ablation studies in Appendix~\ref{app-more-Ablation}.

 Results are summarized in the corresponding tables. Each method column (OSYN and baselines) reports $\text{mean}\pm\text{std}$ over at least five independent runs, followed by a gap in parentheses. $\text{Gap}_{\text{method}}=\text{Oracle Loss}-\text{Estimated Loss}_{\text{method}}$ is the signed deviation between the method's estimate ($\text{Estimated Loss}_{\text{method}}$) and the true loss ($\text{Oracle Loss}$). So positive values indicate valid lower bounds (underestimation) and negative values indicate overestimation. Within each row, \textbf{boldface} marks the \emph{tightest from-below} estimate; if no method falls below $\text{Oracle Loss}$, we bold the smallest absolute gap $\lvert\text{Gap}_{\text{method}}\rvert$. Asterisks (*) indicate baselines for which OSYN attains a significantly smaller signed gap 
than the baseline under a one-sided independent-samples Student's $t$-test ($\alpha=0.01$).
\subsection{The effect of the generator quality on the lower bound} \label{sil-ass-qua-gen-qua}
Assessing the quality of a generator $\mG$ in real-world applications is challenging without access to the true data distribution. To address this limitation, we create a synthetic dataset using a parameterized Gaussian mixture model, which allows for precise control over the underlying distribution. This controlled environment enables us to systematically manipulate generator deviation and directly observe its effects on the accuracy of our lower bound error estimates.

 \textbf{Dataset.} We simulate \(P_0 = (P_{\mathbf{x}}, P_y)\) for multi-class classification problem, where $
P_{\mathbf{x}}(\mathbf{x}) = \sum_{k=1}^5 \pi_k \mathcal{N}(\mathbf{x}\mid\mu_k,\Sigma_k)$ is a Gaussian mixture model (GMM) of five 2D Gaussians. Each Gaussian represents one class. Then, we sample from $P_0$ three discrete datasets $D_{train}, \mS, D_{oracle}$ such that $|D_{oracle}| >> |D_{train}| >> |\mS|$. The set $D_{train}$ is used to train the classifier $\vh$ and $D_{oralce}$ is used to compute the \textbf{oracle loss} $F(D_{oracle}, \vh)$, which describes the true loss $F(P_0, \vh)$, while $\mS$ represents the small test set in practice. Details on experiment settings are in Appendix \ref{simulated-data}.



 \textbf{Experimental details.} To illustrate the change in generator quality, we shift the distribution $P_0$ to build a generator. For simplicity, we only shift the means of the Gaussians in $P_{\vx}$, while keeping the labels and covariance matrices unchanged. The changed means of the GMM are:   $\mu_k^a = \mu_k + a\mu_0$, for each $k \in [5]$,   where $\mu_0 = [1, 0]$ and $a$ is the shift scale.
We measure the quality of the generator through the Kullback-Leibler (KL) divergence between $P_{\gG}^a= (P_{\vx}^a, P_y^a)$ and $P_0$. The larger the KL divergence, the worse the quality of the generator, and the more the distribution $P_\gG$  differs from $P_0$. For each shift scale $a$, we first use using Algorithm~\ref{alg-2} to find the optimized synthetic data from $P_{\gG}^a$ distribution, and then compute the lower bound $\text{LB}_a$  and  the gap $\textbf{Gap} = F(D_{oracle}, \vh)-\text{LB}_a$. The smaller the gap, the better the quality of the evaluation. The used loss $\ell$ is Zero-One loss. 
\begin{figure}[!t]  
\centering
\begin{minipage}[t]{0.48\textwidth}\vspace{0pt}
\centering
\captionof{table}{Lower Bound Quality optimized by OSYN vs.\ Generator Quality}
\label{tab-1}
\small
\resizebox{\linewidth}{!}{%
\begin{tabular}{c c c c}
\toprule
$a$ & \textbf{OSYN} & \textbf{Gap} & $KL(P_\gG^{a},P_{0})$\\
\midrule
$0.00$   & $0.056${\tiny{$\pm$0.000}} & $0.015${\tiny{$\pm$0.000}} & $0.000$ \\
$-0.25$  & $0.057${\tiny{$\pm$0.000}} & $0.014${\tiny{$\pm$0.000}} & $0.011$ \\
$-0.50$  & $0.056${\tiny{$\pm$0.001}} & $0.016${\tiny{$\pm$0.001}} & $0.042$ \\
$-0.75$  & $0.054${\tiny{$\pm$0.001}} & $0.017${\tiny{$\pm$0.001}} & $0.092$ \\
$-1.00$  & $0.052${\tiny{$\pm$0.002}} & $0.019${\tiny{$\pm$0.002}} & $0.160$ \\
$-1.125$ & $0.050${\tiny{$\pm$0.000}} & $0.021${\tiny{$\pm$0.000}} & $0.200$ \\
$-1.25$  & $0.047${\tiny{$\pm$0.001}} & $0.024${\tiny{$\pm$0.001}} & $0.244$ \\
$-1.50$  & $0.045${\tiny{$\pm$0.001}} & $0.027${\tiny{$\pm$0.001}} & $0.340$ \\
$-1.75$  & $0.040${\tiny{$\pm$0.000}} & $0.031${\tiny{$\pm$0.000}} & $0.446$ \\
$-2.00$  & $0.035${\tiny{$\pm$0.001}} & $0.037${\tiny{$\pm$0.001}} & $0.558$ \\
\bottomrule
\end{tabular}}
\end{minipage}
\begin{minipage}[t]{0.48\textwidth}\vspace{0pt}
\centering
\includegraphics[width=\linewidth, height=5cm, keepaspectratio]{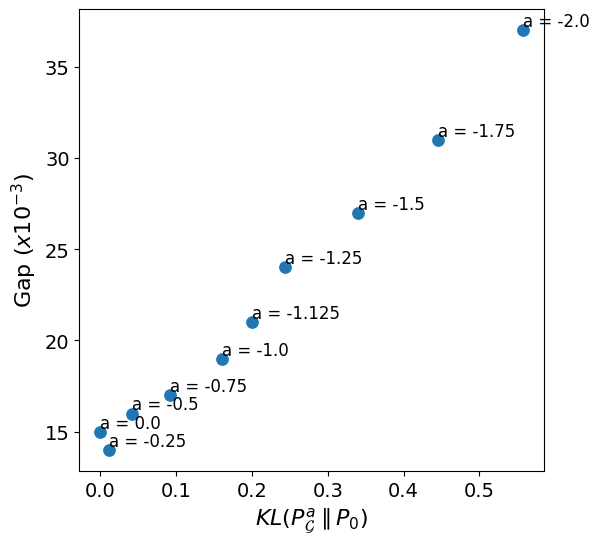}
\captionof{figure}{KL Divergence vs. Gap}
\label{fig-1}
\end{minipage}
\end{figure}

\textbf{Results. } Table \ref{tab-1} shows the optimized results of 10 different scale values for the Decision Tree model. As $|a|$ increases, the KL divergence also increases, and the gap between the lower bound and the true error becomes larger. There is a strongly linear relationship between optimized gaps and KL values (Figure~\ref{fig-1}) with their Pearson correlation is $0.994$. \textit{This indicates that as the quality of the generator deteriorates, the generated distribution deviates more from the original data distribution, and the quality of the evaluation also declines.} Conversely, when the generator quality is high, we can obtain a reliable estimate of the true error. This experiment once again highlights the importance of generator quality, as discussed in the theoretical analysis above.


\subsection{Evaluation on small and biased test set} \label{sil-ass-qua-model-qua}
 
In this part, we investigate whether, for a fixed generator $\mG$, the quality of classifier $\vh$ correlates with the tightness of our lower-bound error estimate, and we characterize how this relationship manifests. To do so, we track the optimized lower bound across a range of classifiers in two experimental scenarios: the controlled GMM simulations described earlier, and a collection of real-world tabular datasets.

\subsubsection{Experimental settings} \label{real-model-quality-eval-qual}
\textbf{Baselines.} We benchmark  OSYN  against two widely-used baselines for estimating the true error:
\begin{itemize}
    \item \textit{Bootstrap Loss:} repeatedly resample the small test set $\mS$ with replacement, evaluate the model at confidence level $1-\delta$, and take the $\delta$ percentile of observed losses.
    \item \textit{Syn-wo-Opt Loss:} draw a dataset directly from the generator and measure the model’s error on this dataset, without any modification or optimization.
\end{itemize}

{\setlength{\extrarowheight}{2pt}
\begin{table}[t]
\caption{Results on simulation data. 
}
\label{tab-2}
\begin {tabular}{l c l l c}
\hline
\textbf{Model} & \textbf{OSYN}  & \textbf{Bootstrap Loss} & \textbf{Syn-wo-Opt Loss} & \textbf{Oracle Loss}\\ 
\hline 
kNN& $0.043${\tiny{$\pm$0.000}} $(+0.011)$  
& $0.049 (+0.005)$ & $\mathbf{0.054}${\tiny{$\pm$0.000}} $(+0.000)$ & $0.054$ \\
SVM & $0.038${\tiny{$\pm$0.002}} $(+0.007)$  
& $\mathbf{0.044} (+0.002)$ & $0.046${\tiny{$\pm$0.001}} $(-0.001)$ & $0.045$ \\
DT& 
$0.061${\tiny{$\pm$0.002}} $(+0.011)$  
& $0.059 (+0.012)^{*}$ & $\mathbf{0.070}${\tiny{$\pm$0.000}} $(+0.002)$ & $0.071$\\
ANN& $\mathbf{0.047}${\tiny{$\pm$0.001}}  $(+0.001)$ 
& $0.044 (+0.003)^{*}$ & $0.048$ {\tiny{$\pm$0.001}} $(-0.001)$ &$0.047$\\
RF& $0.044${\tiny{$\pm$0.002}} $(+0.005)$ 
&$0.050 (-0.002)$
 & $\mathbf{0.049}${\tiny{$\pm$0.001}} $ (+0.000)$ & $0.049$\\
LR & $\mathbf{0.044} ${\tiny{$\pm$0.002}} $ (+0.002)$ 
& 
$0.049 (-0.004)^{*}$ 
& $0.047 ${\tiny{$\pm$0.000}} $(-0.002)$ &$0.046$\\
NB & $\mathbf{0.043}${\tiny{$\pm$0.001}} $(+0.004)$
&$0.076 (-0.029)^{*}$ & $0.049${\tiny{$\pm$0.000}} $(-0.002)$ & $0.047$\\
QDA & $\mathbf{0.042}${\tiny{$\pm$0.001}} $(+0.001)$ 
& $0.044 (-0.001)$ & $0.045${\tiny{$\pm$0.001}} $(-0.002)^{*}$ & $0.043$\\
\hline 
\end{tabular}
\end{table}
}

\textbf{Datasets.} In addition to our controlled simulations, we assess  OSYN  on three standard tabular datasets for binary classification. The CREDIT \cite{yeh2009comparisons} dataset labels loan applicants as credit-worthy or not, the BANK \cite{moro2014data} dataset predicts term-deposit subscription following a campaign, and the ADULT \cite{adult_2} census-income dataset classifies individuals by whether their annual income exceeds $\$50,000$. For each real-world dataset, we perform a stratified random split into a training set $D_{train}$ ($30\%$) and an oracle set $D_{oracle}$ ($70\%$), ensuring class proportions are preserved. From $D_{oracle}$, we then draw a small test sample $\mS$ of 500 instances, comprising 300 majority-class and 200 minority-class examples. More information about the datasets can be found at Appendix~\ref{real-tabular-data}.

\textbf{Experimental details. }In the simulated-data experiments, we fit eight standard classifiers from Scikit-Learn: k-Nearest Neighbors (kNN), Linear SVM, Decision Tree (DT), Random Forest (RF), Neural Net (ANN), Logistic Regression (LR), Naive Bayes (NB), and Quadratic Discriminant Analysis (QDA) — using default library hyperparameters on the training set $D_{train}$. For the real-tabular datasets, we substitute NB and QDA with Gradient Boosting (GB) and Linear Discriminant Analysis (LDA) under the same training regime. Synthetic samples are produced by a CTGAN \cite{xu2019modeling} model fine-tuned on the oracle set $D_{oracle}$. To approximate the generator distribution $P_\gG$, we partition the feature space into $|\mS|$ Voronoi cells centered at each test point in $\mS$, employing FAISS \cite{douze2024faiss} for efficient clustering. We then generate $N=1,000,000$ points from CTGAN, estimate $P_\gG$  via region‐wise frequencies, and select the scaling parameter $b$ by grid search per dataset. Our proposed OSYN method runs for $T=15$ iterations, each producing $50,000$ labeled samples, and applies a fixed combined confidence threshold $\delta = \delta_1 + \delta_2$. The Bootstrap Loss baseline uses 2000 resamples with replacement, takes the $\delta$ percentile of loss. The Syn-wo-Opt calculates the average loss value of $g^*$ random synthetic points without optimization. All methods use the zero–one loss for error calculation, and each configuration is repeated ten times (for tabular data) or three times (for simulated data) to report mean $\pm$ standard deviation.

\subsubsection{Main results} \label{resutl}

\textbf{Simulated data.} Table \ref{tab-2} shows the optimized results of 8 commonly used classifiers. Our method provides lower bounds that are quite close to the true loss for most models. In contrast, the bootstrap-based estimation yields unstable results; some models have bootstrap lower bound values higher than the true loss, while others are lower. A similar pattern is observed with a method using synthetic loss without optimization. This experimental result clearly demonstrates the effectiveness of searching for optimal synthetic data based on theoretical guarantees. If we only use the error of the generated samples to estimate the true error, the resulting value may fluctuate significantly, either being lower than or higher than the oracle loss, and the degree of deviation depends on the quality of the generator. Similarly, if the bootstrap method is used to estimate the error, the resulting value heavily depends on the quality of the small test set $\mS$. If $\mS$ is biased or small in size, the obtained evaluation will not be reliable.









{\setlength{\extrarowheight}{2pt}
\begin{table}[t]
\centering
\small
\caption{Results on real-life datasets (CREDIT, BANK, ADULT). 
}
\label{dif-models-combined-datasets}
\setlength{\tabcolsep}{4pt}   
\renewcommand{\arraystretch}{0.8} 
\begin{tabular}{
    l c l l c}                                
\toprule
\multicolumn{1}{c}{\textbf{Model}} & 
\multicolumn{1}{c}{\textbf{OSYN}} & 
\multicolumn{1}{c}{\textbf{Bootstrap Loss}} & 
\multicolumn{1}{c}{\textbf{Syn-wo-Opt Loss}} & 
\multicolumn{1}{c}{\textbf{Oracle Loss}} \\
\midrule
\midrule
\multicolumn{5}{l }{CREDIT} \\
\midrule
  kNN
  &$\mathbf{0.239}${\tiny$\pm0.001$} $(+0.009)$ 
  & $0.326 (-0.078)^{*}$ 
  & $0.283${\tiny$\pm0.000$} $(-0.034)^{*}$ 
  & $0.248$ \\
  SVM  
  & $\mathbf{0.220}${\tiny$\pm0.000$} $(+0.005)$ 
  & $0.342 (-0.118)^{*}$ 
  & $0.298${\tiny$\pm0.000$} $(-0.073)^{*}$ 
  & $0.224$ \\
  DT   
  &$\mathbf{0.230}${\tiny$\pm0.000$} $(+0.042)$ 
  & 0.292 ($-0.020)$ 
  & $0.356${\tiny$\pm0.001$} $(-0.083)^{*}$ 
  & $0.272$ \\
  ANN  
  &$\mathbf{0.208}${\tiny$\pm0.000$} $(+0.001)$ 
  & 0.342 ($-0.124)^{*}$ & $0.289${\tiny$\pm0.000$} $(-0.071)^{*}$ & $0.219$ \\
  RF   
  & $\mathbf{0.196}${\tiny$\pm0.000$} $(+0.007)$ 
  & 0.322 ($-0.119)^{*}$ & $0.240${\tiny$\pm0.001$} $(-0.036)^{*}$ & $0.203$ \\
  LR   
  & $\mathbf{0.211}${\tiny$\pm0.000$} $(+0.003)$ 
  & 0.344 ($-0.130)^{*}$ & $0.255${\tiny$\pm0.000$} $(-0.041)^{*}$ & $0.214$ \\
  GB   
  &$\mathbf{0.130}${\tiny$\pm0.000$} $(+0.045)$ 
  & 0.234 ($-0.060)^{*}$ & $0.209${\tiny$\pm0.001$} $(-0.035)$ & $0.174$ \\
  LDA  
  & $\mathbf{0.139}${\tiny$\pm0.000$} $(+0.045)$ 
  & 0.262 ($-0.078)^{*}$ & $0.206${\tiny$\pm0.000$} $(-0.023)^{*}$ & $0.184$ \\
\midrule
  \multicolumn{5}{l }{BANK} \\
\midrule
  kNN  
  & $\mathbf{0.096}${\tiny$\pm0.002$} $(+0.006)$ 
  & 0.253 ($-0.151)^{*}$ 
  & $0.215${\tiny$\pm0.011$} $(-0.113)^{*}$ 
  & $0.102$ \\
  SVM  
  & $\mathbf{0.105}${\tiny$\pm0.003$} $(+0.002)$ 
  & 0.272 ($-0.165)^{*}$ 
  & $0.189${\tiny$\pm0.000$} $(-0.082)^{*}$ 
  & $0.108$ \\
  DT
  & $\mathbf{0.089}${\tiny$\pm0.003$} $(+0.006)$ 
  & 0.202 ($-0.108)^{*}$ 
  & $0.243${\tiny$\pm0.029$} $(-0.149)^{*}$ 
  & $0.094$ \\
  ANN  
  & $\mathbf{0.078}${\tiny$\pm0.002$} $(+0.026)$ 
  & 0.240 ($-0.135)^{*}$ 
  & $0.185${\tiny$\pm0.001$} $(-0.080)^{*}$ 
  & $0.105$ \\
  RF   
  & $\mathbf{0.115}${\tiny$\pm0.003$} $(+0.003)$ 
  & 0.350 ($-0.233)^{*}$ 
  & $0.214${\tiny$\pm0.001$} $(-0.097)^{*}$ 
  & $0.117$ \\
  LR   
  & $\mathbf{0.101}${\tiny$\pm0.002$} $(+0.006)$ 
  & 0.247 ($-0.141)^{*}$
  & $0.186${\tiny$\pm0.001$} $(-0.079)^{*}$ 
  & $0.107$ \\
  GB  
  & $\mathbf{0.085}${\tiny$\pm0.001$} $(+0.011)$ 
  & 0.208 ($-0.111)^{*}$ 
  & $0.184${\tiny$\pm0.001$} $(-0.087)^{*}$ 
  & $0.097$ \\
  LDA  
  & $\mathbf{0.091}${\tiny$\pm0.001$} $(+0.017)$ 
  & 0.229 ($-0.120)^{*}$ 
  & $0.222${\tiny$\pm0.020$} $(-0.113)^{*}$ 
  & $0.109$ \\
\midrule
\multicolumn{5}{l }{ADULT} \\
\midrule
  kNN  
  & $0.214${\tiny$\pm0.001$} $(+0.006)$ 
  & 0.300 ($-0.080)^{*}$ & $\mathbf{0.216}${\tiny$\pm0.000$} $(+0.004)$ & $0.220$ \\
  SVM  
  & $\mathbf{0.200}${\tiny$\pm0.001$} $(+0.017)$ 
  & 0.316 ($-0.099)^{*}$ & $0.190${\tiny$\pm0.000$} $(+0.026)^{*}$ & $0.217$ \\
    DT   
    & $\mathbf{0.177}${\tiny$\pm0.001$} $(+0.018)$ 
    & 0.197 ($-0.003)$ & $0.207${\tiny$\pm0.001$} $(-0.013)$ & $0.194$ \\
  ANN  & $\mathbf{0.198}${\tiny$\pm0.001$} $(+0.011)$ 
  & 0.271 ($-0.062)^{*}$ & $0.193${\tiny$\pm0.002$} $(+0.016)^{*}$ & $0.209$ \\
  RF   & $\mathbf{0.185}${\tiny$\pm0.000$} $(+0.012)$ 
  & 0.284 ($-0.087)^{*}$ 
  & $0.180${\tiny$\pm0.000$} $(+0.018)^{*}$ & $0.197$ \\
  LR   & $0.192${\tiny$\pm0.001$} $(+0.017)$ 
  & 0.277 ($-0.068)^{*}$ & $\mathbf{0.210}${\tiny$\pm0.001$} $(+0.000)$ & $0.210$ \\
  GB   & $\mathbf{0.109}${\tiny$\pm0.001$} $(+0.026)$ 
  & 0.150 ($-0.016)$ & $0.142${\tiny$\pm0.000$} $(-0.008)$ & $0.134$ \\
  LDA  & $\mathbf{0.172}${\tiny$\pm0.001$} $(+0.011)$ 
  & 0.224 ($-0.041)^{*}$ & $0.157${\tiny$\pm0.001$} $(+0.027)^{*}$ & $0.184$ \\
\bottomrule
\end{tabular}
\end{table}
}

\textbf{Real-world datasets.} Table \ref{dif-models-combined-datasets} shows the optimized lower bounds of different models compared to two baselines. The obtained lower bounds estimated by OSYN are often close to the true loss for most models, similar to the results on the simulation data above. On the other hand, most of the lower bounds produced by the bootstrap method are inaccurate, with values greater than the true loss. This reflects a limitation of using the bootstrap method to evaluate quality, as the set $\mS$ in this case is highly biased with $F(\mS, \vh) \gg F(D_{\text{oracle}}, \vh)$, which leads to poor evaluation results—the lower bound ends up higher than the true errors.

 We observe that sometimes using the non-optimal synthetic dataset to estimate the true error can outperform OSYN and  Bootstrap, for the kNN and Logistic Regression models for the ADULT dataset. However, the difference is not significant. In all the remaining models and datasets, the non-optimal synthetic loss produces inaccurate results. Meanwhile, the estimates by OSYN, despite the imperfections of the generator, still provide reliable evaluations thanks to the guides from our theoretical analyses.

\subsection{Evaluation on small but balanced test set} \label{data-quality}


We next assess the impact of test‐set characteristics by varying two factors: (i) the number of samples in $\mS$ and (ii) the class‐label balance.  

\textbf{Experimental details.} We utilize the same experimental framework applied in our previous tabular data tests with the ADULT dataset, ensuring that we train and evaluate using identical baselines and CTGAN settings. To isolate the impact of class-label balance, we create a fully balanced test set, $\mS$, comprising 500 instances—250 examples per class—instead of the original 3:2 majority-biased split. Additionally, to investigate the effect of test set size, we vary the size of $|\mS|$ from 300 to 700 samples (maintaining the original 3:2 label ratio) and evaluate the  GB classifier in this regime. 

{\setlength{\extrarowheight}{2pt}
\begin{table}
\caption{Results on ADULT dataset, where the test set is \textit{balanced}. 
}
\label{balance-s-adult}
\begin {tabular}{l  l l l c}
\hline
\multicolumn{1}{c}{\textbf{Model}} & 
\multicolumn{1}{c}{\textbf{OSYN}} & 
\multicolumn{1}{c}{\textbf{Bootstrap Loss}} & 
\multicolumn{1}{c}{\textbf{Syn-wo-Opt Loss}} & 
\multicolumn{1}{c}{\textbf{Oracle Loss}} \\
\hline 
kNN& $\mathbf{0.216}${\tiny{$\pm$0.000}} $(+0.003)$ & $0.349 (-0.131)^{*}$ & $0.194${\tiny{$\pm$0.000}} $(+0.024)^{*}$& $0.218$ \\

SVM & $\mathbf{0.218}${\tiny{$\pm$0.000}} $(+0.003)$ & $0.410(-0.189)^{*}$ & $0.173${\tiny{$\pm$0.000}} $(+0.048)^{*}$& $0.221$ \\

DT& $\mathbf{0.180}${\tiny{$\pm$0.001}} $(+0.014)$ & $0.219 (-0.025)^{*}$ & $0.211${\tiny{$\pm$0.001}} $(-0.017)^{*}$& $0.194$ \\

ANN& $\mathbf{0.196}${\tiny{$\pm$0.000}} $(+0.014)$ & $0.322 (-0.112)^{*}$ & $0.191${\tiny{$\pm$0.001}} $(+0.019)^{*}$& $0.210$ \\

RF& $\mathbf{0.192}${\tiny{$\pm$0.000}} $(+0.004)$ & $0.360 (-0.165)^{*}$ & $0.152${\tiny{$\pm$0.000}} $(+0.044)^{*}$& $0.195$ \\

LR & $\mathbf{0.197}${\tiny{$\pm$0.000}} $(+0.011)$ & $0.327 (-0.118)^{*}$ & $0.195${\tiny{$\pm$0.000}} $(+0.014)^{*}$& $0.208$ \\

GB& $\mathbf{0.128}${\tiny{$\pm$0.000}} $(+0.006)$ & $0.182 (-0.048)^{*}$ & $0.138${\tiny{$\pm$0.000}} $(-0.005)$& $0.134$ \\
LDA & $\mathbf{0.158}${\tiny{$\pm$0.000}} $(+0.025)$ & $0.266 (-0.084)^{*}$ & $0.148${\tiny{$\pm$0.000}} $(+0.034)^{*}$& $0.182$ \\
\hline 
\end{tabular}
\end{table}
}

\textbf{Results.} Table \ref{balance-s-adult} shows the results of our method compared to 2 baselines for the ADULT dataset with $\mS$ of size 500. Our method outperforms the two baselines, achieving the closest optimal lower bounds to the true loss across all 8 classification models. The bootstrap method continues to yield values higher than the oracle loss, as it heavily relies on the value of $F(\mS, \vh)$; if the error on set $\mS$ deviates significantly from the oracle loss, bootstrap results become inaccurate. Meanwhile, the method that only estimates the true loss based on the generated set without optimization also fails to produce reliable results, as the obtained values are unstable, sometimes higher, sometimes lower than the oracle loss. Additionally, our lower bounds tend to be closer to the oracle loss when the set $\mS$ is balanced compared to when $\mS$ is imbalanced. The average gap across the 8 models in the imbalanced case is $0.0152$, whereas in the balanced case, it is $0.0083$.

\begin{figure}[t!]
\centering
\includegraphics[scale= 0.4]{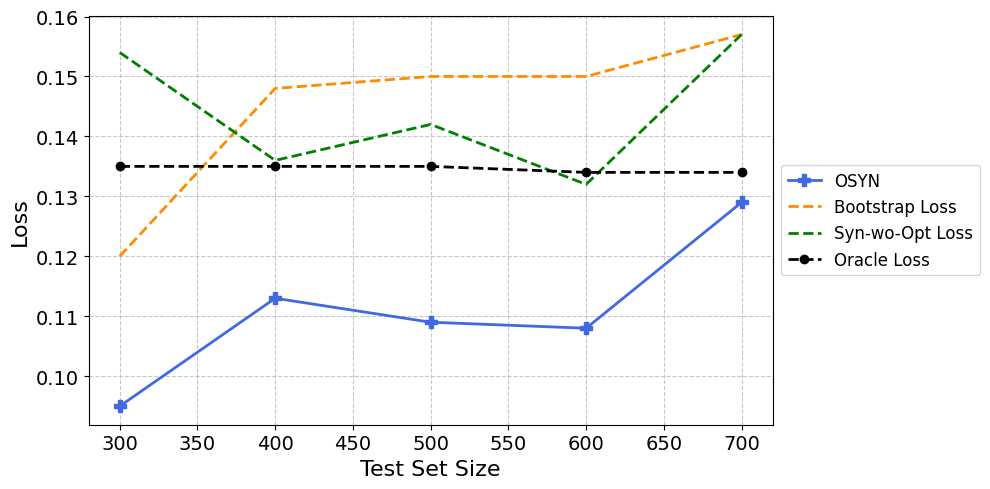} 
\caption{Results when changing the test size.}
\label{fig-3}
\end{figure}
 Figure~\ref{fig-3} depicts the results of the gradient boost model for the ADULT data set with 5 different sizes of $\mS$. As the size of the set $\mS$ increases, the lower bound gradually approaches the oracle loss, and the evaluation quality increases. Meanwhile, the results based on bootstrap and synthetic loss remain unstable, with significant fluctuations even when the size of $\mS$ is large.

\subsection{Ablation studies}
This subsection investigates how the choice of some factors influences the accuracy of of OSYN. 

\subsubsection{Effectiveness of different generators} \label{abla_generators}

We first analyze two key factors: (i) the type of generative model used, and (ii) the dataset used to train the generator. Our experiments involve four widely-used table data generators: CTGAN and TVAE \cite{xu2019modeling}, Copula GAN and Gaussian Copula (GC) \cite{patki2016synthetic}, all implemented by the open-source SDV Python package \cite{patki2016synthetic}. We evaluate their performance using the BANK dataset, under the same configuration and data split as in Section \ref{sil-ass-qua-model-qua}. To ensure robust conclusions, each experimental setting is repeated five times. We report results for three representative classifiers from Section~\ref{sil-ass-qua-model-qua}. Details of the effect of the \emph{generator's training dataset} on lower-bound quality are provided in Appendix~\ref{abla_gen_2}.

{\setlength{\extrarowheight}{2pt}
\begin{table}[!t]
\centering
\fontsize{7pt}{8.5pt}\selectfont
\caption{Evaluation results when using different generators. 
}
\label{table: th1_generator}
\setlength{\tabcolsep}{3pt}   
\begin{tabular}{
    >{\raggedright\arraybackslash}p{0.5cm}  
    l                                       
    l l l c}         
\toprule
\multicolumn{1}{c}{\textbf{Model}} & 
\multicolumn{1}{c}{\textbf{Generator}} & 
\multicolumn{1}{c}{\textbf{OSYN}} & 
\multicolumn{1}{c}{\textbf{Bootstrap Loss}} & 
\multicolumn{1}{c}{\textbf{Syn-wo-Opt Loss}} & 
\multicolumn{1}{c}{\textbf{Oracle Loss}} \\
\midrule
\multirow{4}{*}{ANN}
  & TVAE & $\mathbf{0.063}${\tiny$\pm0.000$} $(+0.041)$ & 0.240 $(-0.135)^{*}$ & $0.123${\tiny$\pm0.000$} $(-0.018)$ & $0.105$\\
  & GC & $\mathbf{0.092}${\tiny$\pm0.000$} $(+0.013)$ & 0.240 $(-0.135)^{*}$ & $0.145${\tiny$\pm0.000$} $(-0.040)^{*}$ & $0.105$\\
  & Copula & $\mathbf{0.094}${\tiny$\pm0.000$} $(+0.011)$ & 0.240 $(-0.135)^{*}$ & $0.192${\tiny$\pm0.000$} $(-0.087)^{*}$ & $0.105$\\
  & CTGAN & $\mathbf{0.095}${\tiny$\pm0.000$} $(+0.010)$ & 0.240 $(-0.135)^{*}$ & $0.141${\tiny$\pm0.000$} $(-0.036)^{*}$ & $0.105$\\
\midrule
\multirow{4}{*}{GB}
  & TVAE & $\mathbf{0.058}${\tiny$\pm0.000$} $(+0.039)$ & 0.208 $(-0.111)^{*}$ & $0.125${\tiny$\pm0.000$} $(-0.028)$ & $0.097$\\
  & GC & $\mathbf{0.080}${\tiny$\pm0.000$} $(+0.016)$ & 0.208 ($-0.111)^{*}$ & $0.181${\tiny$\pm0.001$} $(-0.084)^{*}$ & $0.097$\\
  & Copula & $\mathbf{0.084}${\tiny$\pm0.000$} $(+0.012)$ & 0.208 ($-0.111)^{*}$ & $0.195${\tiny$\pm0.000$} $(-0.098)^{*}$ & $0.097$\\
  & CTGAN & $\mathbf{0.084}${\tiny$\pm0.000$} $(+0.013)$ & 0.208 ($-0.111)^{*}$ & $0.143${\tiny$\pm0.000$} $(-0.047)^{*}$ & $0.097$\\
\midrule
\multirow{4}{*}{LDA}
  & TVAE & $\mathbf{0.070}${\tiny$\pm0.000$} $(+0.038)$ & 0.228 ($-0.119)^{*}$ & $0.124${\tiny$\pm0.000$} $(-0.015)$ & $0.109$\\
  & GC & $\mathbf{0.085}${\tiny$\pm0.000$} $(+0.024)$ & 0.228 ($-0.119)^{*}$ & $0.158${\tiny$\pm0.001$} $(-0.049)^{*}$ & $0.109$\\
  & Copula & $\mathbf{0.089}${\tiny$\pm0.000$} $(+0.020)$ & 0.228 ($-0.119)^{*}$ & $0.200${\tiny$\pm0.000$} $(-0.092)^{*}$ & $0.109$\\
  & CTGAN & $\mathbf{0.090}${\tiny$\pm0.000$} $(+0.019)$ & 0.228 ($-0.119)^{*}$ & $0.121${\tiny$\pm0.000$} $(-0.012)$ & $0.109$\\
\bottomrule
\end{tabular}
\end{table}
}

\textit{Effect of different generative models:} We first assess the impact of generator choice by training all four generative models on the oracle dataset $D_{\mathrm{oracle}}$ for 200 epochs, using a batch size of 500 and default hyperparameters. The results are summarized in Table~\ref{table: th1_generator}. 

As shown in Table \ref{table: th1_generator}, OSYN consistently achieves the smallest gap to the true loss, outperforming both Bootstrap and Syn-wo-Opt across all classifiers and generators. Interestingly, while TVAE yields the smallest Syn-wo-Opt error, suggesting a strong fit to the real data, it performs worse in OSYN-based estimation, with a noticeably larger deviation from the true loss. This likely reflects a mismatch between TVAE’s current hyperparameter setting. In contrast, both CTGAN and Copula GAN offer more stable lower-bound estimates under OSYN, with performance that is comparable and superior to GC and TVAE. These results suggest that although generator quality affects estimation to some extent, OSYN remains robust across a range of generative models and does not overly rely on the fidelity of the synthetic data distribution.

\subsubsection{Effectiveness of the number of partitions} \label{abla_partition}
We next evaluate how the size $K$ of the partition affects the performance of the OSYN method. All experiments are conducted on the BANK dataset and repeated five times. The data split, configuration, and baseline methods, and CTGAN as the generator trained on the oracle dataset \(D_{\mathrm{oracle}}\) follow the setup in Section \ref{sil-ass-qua-model-qua}. We vary \(K\) across \{50, 100, 200, 300, 500\}, while keeping the adjustment parameter \(b\) fixed as described in Section~\ref{osyn_method}.

The results are shown in Table~\ref{table:th3_partition}. We observe that OSYN consistently produces the tightest lower bounds compared to both Bootstrap and Syn-wo-Opt, regardless of the value of $K$. In the case of the ANN classifier, the good performance is achieved when $K = 50$, with a lower bound of $0.082 \pm 0.001$ and a gap of only $+0.022$ from the true loss—substantially outperforming the two baselines. A similar trend is observed for LDA, where $K = 50$ yields the best bound ($0.104 \pm 0.005$), closest to the oracle loss of $0.109$.

{\setlength{\extrarowheight}{2pt}
\begin{table}[!t]
\centering
\small
\caption{Performance when changing the size $K$ of the partition.}
\label{table:th3_partition}
\setlength{\tabcolsep}{3pt}   
\begin{tabular}{
    >{\raggedright\arraybackslash}p{1cm}  
    l                                       
    l l l c}         
\toprule
\multicolumn{1}{c}{\textbf{Model}} & 
\multicolumn{1}{c}{\textbf{K}} & 
\multicolumn{1}{c}{\textbf{OSYN}} & 
\multicolumn{1}{c}{\textbf{Bootstrap Loss}} & 
\multicolumn{1}{c}{\textbf{Syn-wo-Opt Loss}} & 
\multicolumn{1}{c}{\textbf{Oracle Loss}} \\
\midrule
\multirow{4}{*}{ANN}
  & 50 & $\mathbf{0.082}${\tiny$\pm0.001$} $(+0.022)$ & 0.266 $(-0.161)^{*}$ & $0.185${\tiny$\pm0.001$} $(-0.087)^{*}$ & $0.105$\\
  & 100 & $\mathbf{0.075}${\tiny$\pm0.004$} $(+0.029)$ & 0.244 $(-0.139)^{*}$ & $0.184${\tiny$\pm0.001$} $(-0.080)^{*}$ & $0.105$\\
  & 200 & $\mathbf{0.061}${\tiny$\pm0.002$} $(+0.044)$ & 0.240 $(-0.135)^{*}$ & $0.185${\tiny$\pm0.001$} $(-0.087)^{*}$ & $0.105$\\
  & 300 & $\mathbf{0.075}${\tiny$\pm0.003$} $(+0.030)$ & 0.244 $(-0.139)^{*}$ & $0.185${\tiny$\pm0.001$} $(-0.080)^{*}$ & $0.105$\\
  & 500 & $\mathbf{0.097}${\tiny$\pm0.000$} $(+0.008)$ & 0.240 $(-0.135)^{*}$ & $0.184${\tiny$\pm0.000$} $(-0.080)^{*}$ & $0.105$\\
\midrule
\multirow{4}{*}{GB}
  & 50 & $\mathbf{0.079}${\tiny$\pm0.000$} $(+0.018)$ & 0.236 $(-0.139)^{*}$ & $0.185${\tiny$\pm0.000$} $(-0.088)^{*}$ & $0.097$\\
  & 100 & $\mathbf{0.090}${\tiny$\pm0.011$} $(+0.007)$ & 0.226 $(-0.129)^{*}$ & $0.184${\tiny$\pm0.001$} $(-0.087)^{*}$ & $0.097$\\
  & 200 & $\mathbf{0.094}${\tiny$\pm0.005$} $(+0.003)$ & 0.208 $(-0.111)^{*}$ & $0.184${\tiny$\pm0.001$} $(-0.087)^{*}$ & $0.097$\\
  & 300 & $\mathbf{0.081}${\tiny$\pm0.003$} $(+0.016)$ & 0.212 $(-0.115)^{*}$ & $0.184${\tiny$\pm0.000$} $(-0.088)^{*}$ & $0.097$\\
  & 500 & $\mathbf{0.092}${\tiny$\pm0.000$} $(+0.005)$ & 0.208 $(-0.111)^{*}$ & $0.184${\tiny$\pm0.001$} $(-0.087)^{*}$ & $0.097$\\
\midrule
\multirow{4}{*}{LDA}
  & 50 & $\mathbf{0.104}${\tiny$\pm0.001$} $(+0.005)$ & 0.258 $(-0.149)^{*}$ & $0.193${\tiny$\pm0.000$} $(-0.084)^{*}$ & $0.109$\\
  & 100 & $\mathbf{0.087}${\tiny$\pm0.007$} $(+0.021)$ & 0.228 $(-0.119)^{*}$ & $0.193${\tiny$\pm0.000$} $(-0.084)^{*}$ & $0.109$\\
  & 200 & $\mathbf{0.094}${\tiny$\pm0.009$} $(+0.015)$ & 0.230 $(-0.121)^{*}$ & $0.192${\tiny$\pm0.001$} $(-0.084)^{*}$ & $0.109$\\
  & 300 & $\mathbf{0.094}${\tiny$\pm0.000$} $(+0.015)$ & 0.234 $(-0.125)^{*}$ & $0.193${\tiny$\pm0.001$} $(-0.084)^{*}$ & $0.109$\\
  & 500 & $\mathbf{0.104}${\tiny$\pm0.000$} $(+0.005)$ & 0.228 $(-0.119)^{*}$ & $0.209${\tiny$\pm0.024$} $(-0.100)^{*}$ & $0.109$\\
\bottomrule
\end{tabular}
\end{table}
}

 For the GB classifier, while the overall bounds remain strong, the optimal $K$ is less clear. The bound tightness slightly improves as $K$ increases from 50 to 200, but the differences are marginal. Interestingly, for mid-range values of $K$ (e.g., 100–300), we observe a slight increase in variance, particularly in the ANN and GB models, suggesting that some instability may arise due to interactions between the partition size and the fixed optimization parameter $b$. At the extreme end, $K = 500$ (i.e., one partition per sample) generally leads to stable and competitive bounds, especially in GB, where the standard deviation is among the lowest. Overall, these results highlight the robustness of OSYN to the choice of $K$. While smaller $K$ values, such as 50, can yield marginally tighter estimates, larger $K$ values offer improved stability, suggesting that OSYN can adapt to a wide range of partitioning schemes without significant degradation in performance.

\subsubsection{Low-tuning heuristics for parameters} \label{abla_low_tune}
Lastly, we consider the regimes where good parameter settings often appear. Such regimes are important to enable an easy choice in practice.
The main parameters include: the partition size \(K\), the adjustment parameter \(b\), and the confidence parameter \(\delta_2\). We perform a grid search for \(K\in\{50,100,200,300,400,500\}\), \(b\in[0,7]\) (with a step of 0.1), and \(\delta_2 \in [10^{-4}, \delta-10^{-4}]\) (with a step of  \(10^{-4}\)).

\begin{figure}[!t]
  \centering
  \begin{subfigure}[t]{0.48\linewidth}
    \centering
    \includegraphics[width=\linewidth]{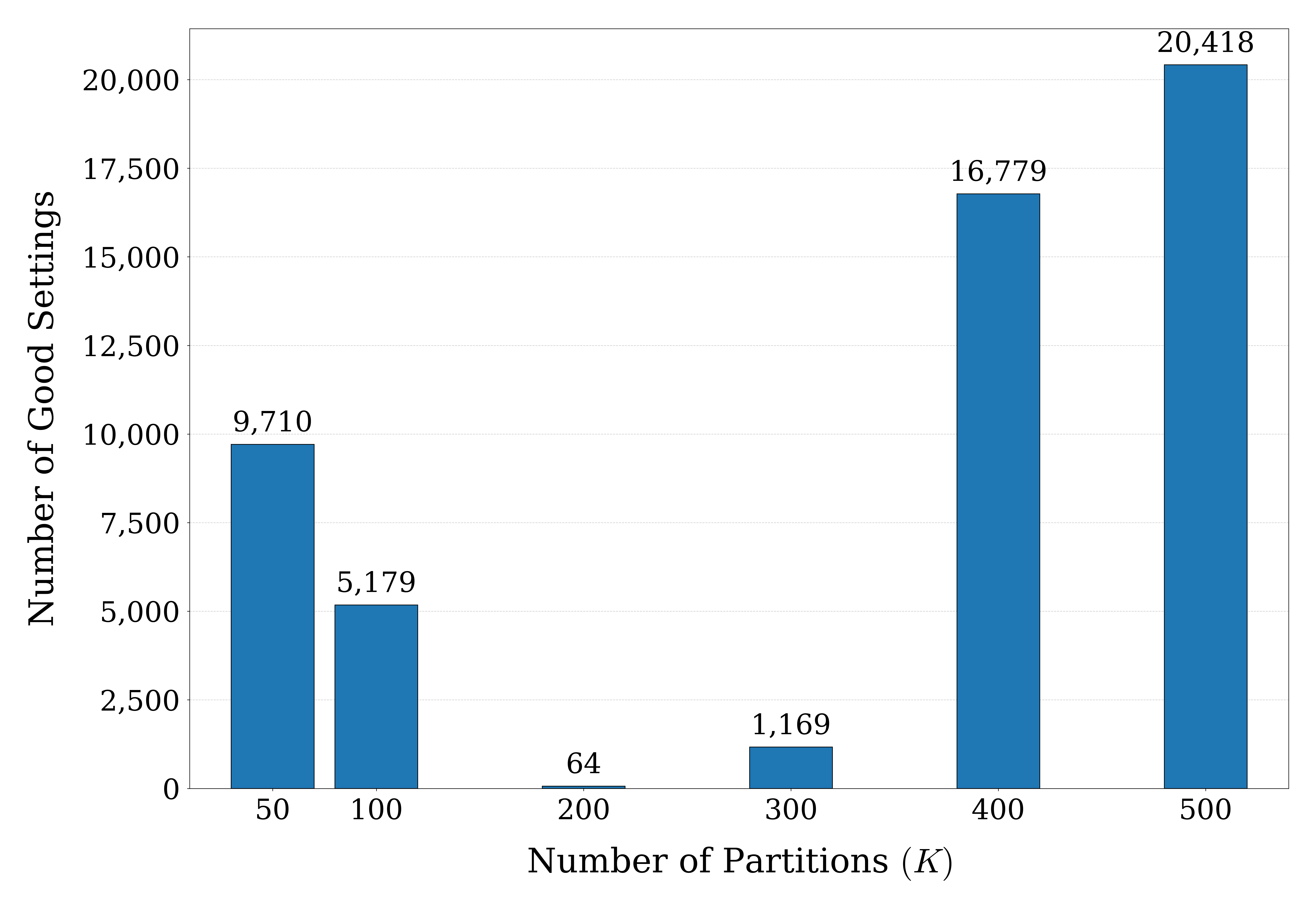}
    \caption{\textbf{Number of \emph{good} settings vs. partition size \(K\)}. The monotone increase supports using large \(K\) in practice.} \label{fig:good_vs_K}
  \end{subfigure}\hfill
  \begin{subfigure}[t]{0.48\linewidth}
    \centering
    \includegraphics[width=\linewidth]{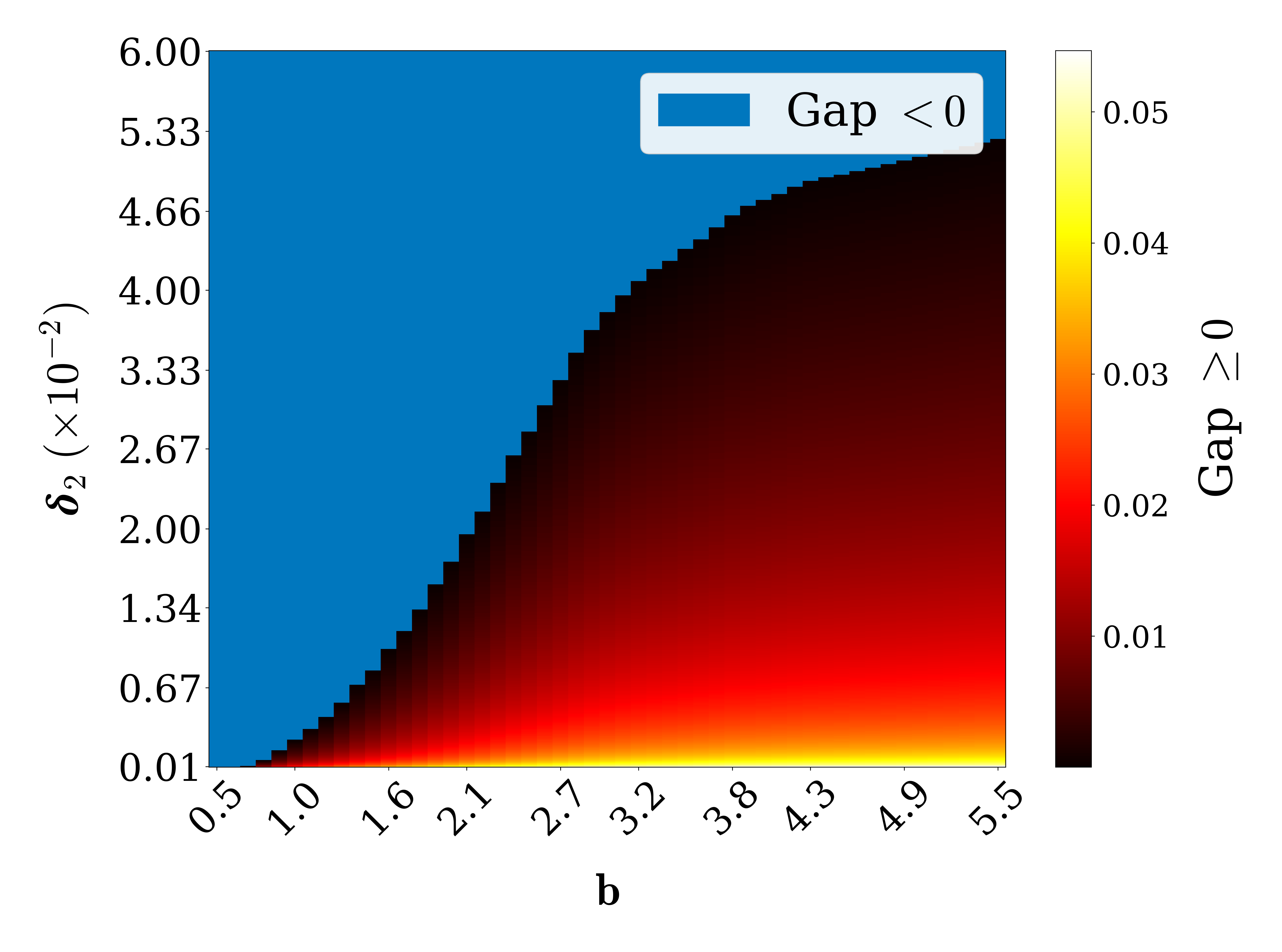}
    \caption{\textbf{Heat map of configurations \((b,\delta_{2})\) at \(K=500\) and \(\delta=0.1\)}.The best settings form a dark band at small \(\delta_{2}\) and moderate \(b\).} \label{fig:delta2_b}
  \end{subfigure}
  \caption{{Low-tuning heuristics for parameters} for 8 classifiers on BANK dataset.}
  \label{fig:abla_low_tune}
\end{figure}

\textbf{Choosing the partition size $K$:}
A triplet \((K,\delta_2,b)\) is deemed \emph{good} if the optimized lower bound differs from the oracle loss by at most \(10^{-3}\). Figure~\ref{fig:good_vs_K} shows that the number of good triplets grows sharply with \(K\) across all 8 classifiers. This aligns with sub-section~\ref{abla_partition}, where larger \(K\) improves stability of the bound when partitions are reasonably balanced.

\emph{Practical suggestion:} prefer large \(K\) (e.g., \(K\!\ge\!400\)); when feasible, set \(K\) close to the small test set size so that each data point in the small test set effectively anchors one partition. If many partitions are empty or extremely small, reduce \(K\) or increase the number of iterations.

\textbf{Heuristics for \(b\) and \(\delta_2\):}
The parameter \(b\) balances per-partition mass and prevents a single small test point from dominating the lower bound. Very large \(b\) renders the adjustment inactive, while very small \(b\) makes it hard for partitions to accumulate enough points, hurting the estimate.

 Figure~\ref{fig:delta2_b} delineates where OSYN is valid (colored, $\text{Gap}\ge0$) versus invalid (blue, $\text{Gap}<0$), with darker shades indicating tighter bounds (smaller positive $\text{Gap}$). The feasible region expands as \(b\) increases, while the tightest configurations lie along the dark ridge at small \(\delta_{2}\) and moderate \(b\). The boundary between these regions reveals a clear trade-off: increasing $b$ requires a smaller $\delta_{2}$ to remain valid, whereas smaller $b$ tolerates a larger $\delta_{2}$. Empirically most models work well with \(b\in[0.5,2]\), paired with moderate \(\delta_2\in[10^{-3}, 10^{-2}]\). A consistent exception is Random Forest, which benefits from a wider \(b\) range (often \(b\ge 4\)) together with very small \(\delta_2\) (typically \(\le 10^{-3}\)).

\subsection{Discussion}
In this section, we outline key practical considerations for applying OSYN. 

\textbf{Prerequisites.} OSYN relies on the availability of a pretrained generative model from a related domain, which can be adapted with a small amount of target data so that the generated samples approximate the true distribution. This assumption is increasingly practical given recent progress \cite{zhang2025generative,wang2018transferring,luvcic2019high} in fine-tuning generative models with limited data. We recommend on the order of several hundred labeled samples to obtain more accurate and stable estimates.
    
\textbf{Diagnostics for generator adequacy. }The effectiveness of OSYN depends critically on the quality of the generator, so basic diagnostics should be performed before use. On the qualitative side, one may compare histograms or visualize real versus generated samples. Quantitatively, measures such as FID, IS, or embedding-based distances (e.g., Wasserstein, MMD) provide stronger evidence of distributional similarity. Furthermore, unsuitable samples can be identified and filtered (for instance, using CLIP-based zero-shot validation or classifier confidencefor with image data)

\textbf{When OSYN is preferable.} Direct evaluation with a large labeled test set is always the gold standard. However, OSYN is particularly attractive when labeled data are scarce, biased, or imbalanced, and when a reliable pretrained generator is available. In such scenarios, OSYN offers an advantage by producing confidence-based guarantees on performance, rather than relying on a single point estimate from a limited evaluation set.

\textbf{Failure modes and mitigations.} OSYN may fail if the synthetic distribution deviates substantially from the true data, leading to loose bounds. This risk highlights the importance of generator adequacy checks. Another potential issue is mode collapse, where the generator misses rare but important patterns. In practice, explicit monitoring of sample diversity and the use of conditional generation techniques can help mitigate these problems and improve coverage of the target space.

\section{Conclusion}\label{sec13}

This paper introduces OSYN for evaluating model performance using synthetic data when labeled data is limited. The method is supported by both theoretical analysis and empirical validation, showing that OSYN provides reliable performance estimates. Simulations reveal a strong link between generator quality and estimate accuracy, underscoring the importance of high-fidelity synthetic data. Extensive experiments across classification and regression tasks (Appendix~\ref{app-Regression}) demonstrate OSYN's effectiveness and robustness under various data generation settings. An implementation of OSYN appears at https://github.com/daniel-nguyen121/osyn/.

 Although providing a theoretically-grounded method for model evaluation, this work remains some limitations. Firstly, OSYN only relies on the lower bound on the true error of a model, but lack an upper bound to provide a two-sided confidence interval for the error estimate. Deriving a tight but computable bound should be an interesting direction. Secondly, OSYN has some hyperparameters that often need a careful choice to obtain a good performance for OSYN. Lastly, we have not yet evaluated the method on high-dimensional or highly structured data, such as text or images, where the feature space can be extremely large and sparse. In such settings, generating synthetic samples can require higher computational costs. Addressing these challenges is an important direction for future work.

\section*{Acknowledgement} 
This research is funded by Vietnam National Foundation for Science and Technology Development (NAFOSTED) under grant number 102.01-2025.47.

\backmatter

\newpage
\begin{appendices}

\section{Proofs for Main Results} \label{Proof-main-results}

\begin{proof}[Proof of Theorem \ref{low-bound}.]
Following the proof of Theorem 7 in \cite{than2025gentle}, we decompose $E = F(P_0, \vh) =   \sum\limits_{i=1}^K p_i a_i$. Note that all $a_i$'s are fixed w.r.t. the sampling of $(g_1, ..., g_K)$. 
Since  $(g_1, ..., g_K)$ is a mutinomial random variable with parameters $g$ and $(p_1, ..., p_K)$. For   any $M \in (0, \beta/\hat{a})$, Lemma 2 in \cite{kawaguchi2022robustness} shows that

\begin{align*}
\Pr\left(\sum_{i=1}^K p_i a_i \ge \sum_{i=1}^K\frac{g_i}{g} a_i - M  \right)  
&\ge 1- \exp\left(-\frac{gM}{2\hat{a}} \min\left\{1, \frac{\hat{a}M}{\beta}\right\}\right) = 1- \exp\left(-\frac{gM^2}{2\beta}\right) 
\end{align*}

For any $\delta_1> \exp(-0.5{g\beta}/{\hat{a}^2})$, choosing $M = \sqrt{{-2\beta \ln\delta_1}/{g}}$, we obtain
$
\Pr\left(\sum_{i=1}^K p_i a_i \ge \sum_{i=1}^K \frac{g_i}{g} a_i - \sqrt{\frac{-2\beta \ln\delta_1}{g}}  \right)  \ge  1- \frac{\delta_1}{2}
$. 
In other words, the following holds with probability at least $1-\delta_1$:
\begin{eqnarray}
\label{app-eq-lower-bound-01}
E &\ge& \sum_{i=1}^K \frac{g_i}{g} a_i - \sqrt{\frac{-2\beta \ln\delta_1}{g}} =  \sum_{i \in \mT_S} \frac{g_i}{g} a_i - \sqrt{\frac{-2\beta \ln\delta_1}{g}} 
\end{eqnarray}

We next observe that $\sqrt{\beta} = \sqrt{ 2\sum\limits_{i=1}^K p_i a_i^2} \le \sqrt{ 2\hat{a}\sum\limits_{i=1}^K p_i a_i} = \sqrt{ 2\hat{a}} \sqrt{ E}$, due to $0 \le a_i \le \hat{a}$ for all $i$. Utilizing this information into (\ref{app-eq-lower-bound-01}), we have  the following  with probability at least $1-\delta_1$:
\begin{eqnarray}
\label{lower-bound-03}
E &\ge& \sum\limits_{i \in \mT_S} \dfrac{g_i}{g} a_i -  \sqrt{E} \sqrt{\frac{-4 \hat{a} \ln\delta_1}{g}}
\end{eqnarray}

It is easy to see that each $\frac{g_i}{g} F(\mS_i, \vh)$ is an empirical version of $\frac{g_i}{g} a_i$, since $\vh$ is fixed w.r.t. the sampling of $\mS$ and $\mG$. Furthermore, $0 \le \frac{g_i}{g} F(\mS_i, \vh) \le C_h {g_i}/{g}$ for each $i$. Therefore, for all $t>0$, Hoeffding's inequality implies 
\begin{equation*}
  \Pr\left[\sum\limits_{i \in \mT_S}\frac{g_i}{g} a_i -\sum\limits_{i \in \mT_S} \frac{g_i}{g}F(\mS_i, \vh) \geq t\right] \leq \exp\left(-\frac{2t^2}{C_h^2\sum\limits_{i \in \mT_S} \frac{g_i^2}{g^2}}\right)  
\end{equation*}
Choosing $\delta_2=\exp\left(-\frac{2t^2}{C_h^2\sum_{i \in \mT_S} (g_i/g)^2}\right)$ means $t = C_h\sqrt{(-0.5\ln{\delta_2}) \sum_{i \in \mT_S} (g_i/g)^2}$. 
Hence, 
\begin{equation}
\label{app-eq-lower-bound-06}
\Pr\left[\sum\limits_{i \in \mT_S}\dfrac{g_i}{g} a_i\geq \sum\limits_{i \in \mT_S}\dfrac{g_i}{g}F(\mS_i, \vh) - C_h\sqrt{(-0.5\ln{\delta_2}) \sum_{i \in \mT_S} (g_i/g)^2} \right] \geq 1-\delta_2
\end{equation}

Combining (\ref{lower-bound-03}) and (\ref{app-eq-lower-bound-06}) and the union bound, the following holds with probability at least $1-\delta_1-\delta_2$:
\begin{equation}
\label{app-eq-lower-bound-07}
E \geq \sum\limits_{i \in \mT_S}\dfrac{g_i}{g}F(\mS_i, \vh)-C_h\sqrt{\dfrac{1}{2}\ln\dfrac{1}{\delta_2}\sum\limits_{i \in \mT_S} \dfrac{g_i^2}{g^2}} - \sqrt{E} \sqrt{\frac{-4 \hat{a} \ln\delta_1}{g}}
\end{equation}

Observe further that
\begin{align}
F(\mG, \vh) - \sum\limits_{i \in \mT_S}\dfrac{g_i}{g}F(\mS_i, \vh) &= \dfrac{1}{g} \sum\limits_{i \in \mT_S}\sum\limits_{\vu \in \mG_i}l(\vh, \vu)- \sum\limits_{i \in \mT_S}\dfrac{g_i}{g} F(\mS_i, \vh) = \dfrac{1}{g} \sum\limits_{i \in \mT_S}\left[\sum\limits_{\vu \in \mG_i}l(\vh, \vu) - g_i F(\mS_i, \vh) \right]\\
\nonumber
&=\dfrac{1}{g} \sum\limits_{i \in \mT_S}\sum\limits_{\vu \in \mG_i}\left[l(\vh, \vu) - F(\mS_i, \vh)\right] = \dfrac{1}{g} \sum\limits_{i \in \mT_S}\sum\limits_{\vu \in \mG_i}\left[l(\vh, \vu)-\dfrac{1}{n_i}\sum\limits_{\vs \in \mS_i}l(\vh, \vs)\right]\\
\nonumber
&= \dfrac{1}{g} \sum\limits_{i \in \mT_S}\sum\limits_{\vu \in \mG_i}\sum\limits_{\vs \in \mS_i}\dfrac{1}{n_i}[l(\vh, \vu)-l(\vh, \vs)]\\
\label{lower-bound-04}
&\leq \dfrac{1}{g} \sum\limits_{i \in \mT_S}\sum\limits_{\vu \in \mG_i}\sum\limits_{\vs \in \mS_i}\dfrac{1}{n_i}|l(\vh, \vu)-l(\vh, \vs)|  = \sum\limits_{i \in \mT_S}\dfrac{g_i}{g}\epsilon(\mG_i, \mS_i).
\end{align}
Combining this with (\ref{app-eq-lower-bound-07}) results in 
\begin{eqnarray}
  E &\geq& F(\mG, \vh) - \sum\limits_{i \in \mT_S}\dfrac{g_i}{g}\epsilon(\mG_i, \mS_i)-C_h\sqrt{\dfrac{1}{2}\ln\dfrac{1}{\delta_2}\sum\limits_{i \in \mT_S} \dfrac{g_i^2}{g^2}}- \sqrt{E} \sqrt{\frac{-4 \hat{a} \ln\delta_1}{g}}  \\
  &\ge& F(\mG, \vh) -  \epsilon_h(\mG, \mS) - B - \sqrt{E} \sqrt{4D}
\end{eqnarray}
Solving this inequality for $\sqrt{E}$ under the condition of $F(\mG, \vh) \ge  \epsilon_h(\mG, \mS) + B$ will complete the proof.
\end{proof}

\begin{proof}[Proof of Theorem \ref{Asymtotic-theorem}.]
Following the same proof of Theorem \ref{low-bound}, we first  show the same result for $E = F(P_0, \vh)$ as (\ref{lower-bound-03}):
\begin{eqnarray}
\label{Asymtotic-01}
E &\ge& \sum_{i \in [K]} \dfrac{g_i}{g} a_i -  \sqrt{E} \sqrt{\frac{-4 \hat{a} \ln\delta_1}{g}}
\end{eqnarray}
with probability at least $1-\delta_1$, for any $\delta_1 > \exp(- {0.5 g\beta}/{\hat{a}^2})$. Using the same arguments as (\ref{lower-bound-04}), we can show that
\begin{align}
\label{Asymtotic-02}
F(P_g, \vh) - \sum_{i \in [K]}\frac{g_i}{g} a_i  
\le \sum_{i \in [K]} \frac{g_i}{g} d(P_g, P_0 | \gZ_i)
\end{align}
Combining this with (\ref{Asymtotic-01}), we have  the following  with probability at least $1-\delta_1$: 
\begin{eqnarray}
  E &\ge& F(P_g, \vh) -  \sum_{i \in [K]} \frac{g_i}{g} d(P_g, P_0 | \gZ_i) -  \sqrt{E} \sqrt{\frac{-4 \hat{a} \ln\delta_1}{g}}
\end{eqnarray}
Solving this inequality for $\sqrt{E}$, we have  the following  with probability at least $1-\delta_1$: 
\begin{equation}
\label{Asymtotic-03}
   E \ge \left(\sqrt{F(P_g, \vh) - \sum_{i \in [K]} \frac{g_i}{g} d(P_g, P_0 | \gZ_i) + \frac{- \hat{a} \ln\delta_1}{g}} - \sqrt{\frac{- \hat{a} \ln\delta_1}{g}}\right)^2
\end{equation}
As $g \rightarrow \infty$, it is easy to see that $\frac{g_i}{g} \rightarrow p_i$, $\frac{- \hat{a} \ln\delta_1}{g} \rightarrow 0$ and we can choose $\delta_1 \rightarrow 0$. Therefore $E \ge F(P_g, \vh) - \sum_{i \in [K]} p_i d(P_g, P_0 | \gZ_i)$ almost surely.

Finally, observe that $\sum_{i \in [K]} p_i^g a_i   = \sum_{i \in [K]} p_i^g a_i - F(P_g, \vh) + F(P_g, \vh)$. Using the same arguments as (\ref{lower-bound-04}), we can show that $\sum_{i \in [K]} p_i^g a_i - F(P_g, \vh) \le \sum_{i \in [K]} p_i^g d(P_g, P_0 | \gZ_i)$. Therefore, $\sum_{i \in [K]} p_i^g a_i \le F(P_g, \vh) + \sum_{i \in [K]} p_i^g d(P_g, P_0 | \gZ_i)$, completing the proof.
\end{proof}


\section{Experimental Details}
\label{exp-details}

This appendix includes details on the experiments, including (i) the datasets, and (ii) the different
settings of the experiments.

\subsection{Datasets}
\label{app-datasets}

\subsubsection{Simulated data}
\label{simulated-data}
In experiment \ref{sil-ass-qua-gen-qua}, we create simulation datasets from a Gaussian mixture distribution $P_\vx(\vx) = \sum\limits_{k \in [5]}\pi_k\mathcal{N}(\vx|\mu_k, \Sigma_k), k = \overline{1,5}$, where
$\mu_1= [0,0], \mu_2 = [12, 15], \mu_3 = [15, 6], \mu_4 = [6,7], \mu_5 = [3, 18]$; $\pi_1 = 1/20; \pi_2 = 3/20; \pi_3= 4/20, \pi_4 = 5/20, \pi_5 = 7/20$, and 
$$\Sigma_1 = \begin{bmatrix}
    2 & 0.5 \\
    0.5 & 4 
  \end{bmatrix}, 
  \Sigma_2 = \begin{bmatrix}
    5 & -2 \\
    -2 & 7 
  \end{bmatrix}, 
  \Sigma_3 = \begin{bmatrix}
    1 & 0.9 \\
    0.9 & 5 
  \end{bmatrix},
  \Sigma_4 = \begin{bmatrix}
    10 & -7 \\
    -7 & 15 
  \end{bmatrix}, 
  \Sigma_5 = \begin{bmatrix}
    5 & 0.9 \\
    0.9 & 5 
  \end{bmatrix}.$$

$D_{train}, \mS, D_{oracle}$ have the sizes of 5000, 500, and 20,000, respectively. In experiment~\ref{sil-ass-qua-gen-qua}, we randomly choose 500 points of the small test set from only one class. Meanwhile, for the classification task in experiment~\ref{sil-ass-qua-model-qua}, we select 500 samples from two classes among the total of five classes. 
 
\subsubsection{Real-world tabular datasets}
\label{real-tabular-data}

\textbf{CREDIT Dataset}~\cite{yeh2009comparisons}. The dataset contains information about credit card clients from a bank in Taiwan. The task is to classify whether a customer's payment is default or not. Among the total of 30,000 clients, 6,636 clients (22\%) are cardholders with default payment. The dataset has 23 features, including both categorical and numerical ones. 

\textbf{BANK Dataset}~\cite{moro2014data}. The Bank Marketing dataset is derived from direct marketing campaigns conducted by a Portuguese banking institution. It comprises 45,211 client records, featuring 17 input variables and one binary outcome. The goal of the prediction task is to ascertain whether a client will subscribe to a term deposit. Out of the total records, only 5,289 (11.7\%) reflect positive responses, indicating a subscription of ``yes".   

\textbf{ADULT Dataset}~\cite{adult_2}. The Adult dataset has 32,561 instances with a total of 14 features, including
demographic (age, gender, race), personal (marital status), and financial (income) features, amongst others. The classification task is to predict whether a person's income is over \$50K a year. There are total of 7841 samples with annual income exceeding \$50K (24\%).

\subsection{Experiment}
\subsubsection{Experiment \ref{sil-ass-qua-gen-qua}}
\label{app-sil-ass-qua-gen-qua}

 The number of synthetic points is $g= 50,000$. Optimization is carried out with $T=15$ iterations, each using $N=50,000$ synthetic samples to filter $(N = |\mG^t|)$. The results are shown with $\delta_1 = 0.01, \delta_2 = 0.2$, or our lower bounds have a confidence of about $80\%$.

\subsubsection{Experiment \ref{sil-ass-qua-model-qua}}
\label{app-sil-ass-qua-model-qua}

In this experiment, the value of $\delta$ is varied between datasets but is chosen so that the classifier confidence is not less than $80\%$.

\section{Additional Ablation Studies}\label{app-more-Ablation}

\subsection{OSYN in regression problems} \label{app-Regression}

{\setlength{\extrarowheight}{2pt}
\begin{table}[t]
\centering
\fontsize{7.5pt}{8.5pt}\selectfont
\caption{Results on regression task. 
}
\label{reg task}
\setlength{\tabcolsep}{3pt}   
\begin{tabular}{
    >{\raggedright\arraybackslash}p{0.8 cm}  
    l                                       
    l l l c}         
\toprule
\multicolumn{1}{c}{\textbf{Dataset}} & 
\multicolumn{1}{c}{\textbf{Model}} & 
\multicolumn{1}{c}{\textbf{OSYN}} & 
\multicolumn{1}{c}{\textbf{Bootstrap Loss}} & 
\multicolumn{1}{c}{\textbf{Syn-wo-Opt Loss}} & 
\multicolumn{1}{c}{\textbf{Oracle Loss}} \\
\midrule
\multirow{7}{*}{BIKE}
  & Ridge  & $\mathbf{0.069}${\tiny$\pm0.001$} $(+0.009)$ & $0.091 (-0.013)^{*}$ & $0.160${\tiny$\pm0.000$} $(-0.083)^{*}$ & $0.077$ \\
  & SVR  & $0.061${\tiny$\pm0.000$} $(+0.040)$ & $\mathbf{0.091} (+0.010)$ & $0.152${\tiny$\pm0.001$} $(-0.051)^{*}$ & $0.101$ \\
  & DTR & $\mathbf{0.049}${\tiny$\pm0.001$} $(+0.006)$ & 0.066 ($-0.011)^{*}$ & $0.168${\tiny$\pm0.001$} $(-0.113)^{*}$ & $0.055$ \\
  & RFR & $\mathbf{0.040}${\tiny$\pm0.005$} $(+0.002)$ & 0.057 ($-0.014)^{*}$ & $0.152${\tiny$\pm0.002$} $(-0.110)^{*}$ & $0.043$ \\
  & GBR & $\mathbf{0.054}${\tiny$\pm0.001$} $(+0.008)$ & 0.069 ($-0.006)$ & $0.146${\tiny$\pm0.000$} $(-0.084)^{*}$ & $0.062$ \\
  & KNR  & $0.058${\tiny$\pm0.000$} $(+0.029)$ & $\mathbf{0.068}$ ($+0.019)$ & $0.154${\tiny$\pm0.001$} $(-0.067)^{*}$ & $0.087$ \\
  & ANN   & $\mathbf{0.039}${\tiny$\pm0.000$} $(+0.003)$ & 0.046 ($-0.004)^{*}$ & $0.175${\tiny$\pm0.001$} $(-0.134)^{*}$ & $0.042$ \\
\midrule
\multirow{7}{*}{ADULT}
  & Ridge & $\mathbf{0.261}${\tiny$\pm0.000$} $(+0.001)$ & 0.307 ($-0.045)^{*}$ & $0.272${\tiny$\pm0.000$} $(-0.010)^{*}$ & $0.262$ \\
  & SVR  & $\mathbf{0.234}${\tiny$\pm0.000$} $(+0.013)$ & 0.292 ($-0.046)^{*}$ & $0.267${\tiny$\pm0.000$} $(-0.020)^{*}$ & $0.247$ \\
  & DTR   & $\mathbf{0.182}${\tiny$\pm0.013$} $(+0.023)$ & 0.240 ($-0.035)^{*}$ & $0.240${\tiny$\pm0.001$} $(-0.034)^{*}$ & $0.205$ \\
  & RFR  & $\mathbf{0.188}${\tiny$\pm0.002$} $(+0.015)$ & 0.249 ($-0.045)^{*}$ & $0.242${\tiny$\pm0.000$} $(-0.039)^{*}$ & $0.203$ \\
  & GBR & $\mathbf{0.200}${\tiny$\pm0.001$} $(+0.012)$ & 0.258 ($-0.046)^{*}$ & $0.245${\tiny$\pm0.000$} $(-0.033)^{*}$ & $0.212$ \\
  & KNR  & $\mathbf{0.204}${\tiny$\pm0.002$} $(+0.011)$ & 0.269 ($-0.055)^{*}$ & $0.233${\tiny$\pm0.000$} $(-0.018)^{*}$ & $0.214$ \\
  & ANN & $\mathbf{0.226}${\tiny$\pm0.000$} $(+0.020)$ & 0.280 ($-0.033)^{*}$ & $0.321${\tiny$\pm0.001$} $(-0.075)^{*}$ & $0.246$ \\
\bottomrule
\end{tabular}
\end{table}
}

In this section, we investigate the effectiveness of OSYN on the regression task. We use two datasets, BIKE (Bike Sharing~\cite{bike_sharing_275}) and ADULT~\cite{adult_2}; for the ADULT dataset, we treat the binary target variable as a numerical variable taking values 0 and 1. The datasets are preprocessed by removing irrelevant features, normalizing continuous variables, and applying one-hot encoding to categorical variables. The partitioning into training, test, and oracle sets, as well as the procedure for finding the optimal synthetic set, is carried out in the same manner as for the classification task. The used loss is Mean Absolute Error (MAE). For each dataset, we evaluate the performance of 7 regressors, namely, Ridge, Support Vector Regression (SVR), Decision Tree Regressor (DTR), Random Forest Regressor (RFR), Gradient Boosting Regressor (GBR), KNeighbors Regressor (KNR), and MLP Regressor (MLP). The small set $\mS$ has the size of 500, and is biased, with $F(\mS, \vh) \gg F(D_{\text{oracle}}, \vh)$. 

 From Table~\ref{reg task}, we observe that the OSYN method yields the closest lower bound to the true loss in the vast majority of cases. Although there are three models trained on the BIKE dataset whose losses estimated by Bootstrap are better than those obtained by OSYN and the other baselines, for the remaining models, Bootstrap produces inaccurate estimates, with estimated losses exceeding the true loss, reflecting its limitation of being highly dependent on the set $\mS$. The random synthetic method without point selection yields poor estimates with large errors. These observations are consistent with the results obtained from the simulated data and with the classification task discussed above.

\subsection{Impact of training dataset on generator quality} \label{abla_gen_2}
To further investigate robustness, we examine how the size and quality of the training data impact the generator’s performance in lower-bound estimation. Each generator is trained separately on three datasets of varying representativeness: the full oracle dataset $D_{\mathrm{oracle}}$, a training set $D_{\mathrm{train}}$, and a small test set $D_{\mathrm{small}}$ (see Section~\ref{sil-ass-qua-model-qua}). 

\begin{table}[!t]
  \centering
  \fontsize{7pt}{8.5pt}\selectfont
  \caption{
    Effect of the generator when trained from a training data of varying size. The generators are trained on $D_{\mathrm{oracle}}$, $D_{\mathrm{train}}$, and $D_{\mathrm{small}}$, respectively. OSYN demonstrates robustness across all training conditions. 
  }
  \label{table:th2_generator}
  \setlength{\tabcolsep}{2.5pt}   
  \begin{tabular}{l l l c c c c}
    \toprule
    \textbf{Model} 
      & \textbf{Generator} 
      & \textbf{Set} 
      & \textbf{OSYN} 
      & \textbf{Bootstrap Loss} 
      & \textbf{Syn-wo-Opt Loss} 
      & \textbf{Oracle Loss} \\
    \midrule
    \multirow{9}{*}{ANN}
      & \multirow{3}{*}{GC}
        & $D_{train}$  & $\mathbf{0.082}{\scriptstyle\pm0.005}$ $(+0.023)$ 
                      & $0.240$ $(-0.135)^{*}$ 
                      & $0.133{\scriptstyle\pm0.000}$ $(-0.028)$ 
                      & $0.105$ \\
      &                           & $D_{small}$  & $\mathbf{0.093}{\scriptstyle\pm0.001}$ $(+0.012)$ 
                      & $0.240$ $(-0.135)^{*}$ 
                      & $0.355{\scriptstyle\pm0.001}$ $(-0.250)^{*}$ 
                      & $0.105$ \\
      &                           & $D_{oracle}$ & $\mathbf{0.092}{\scriptstyle\pm0.000}$ $(+0.013)$ 
                      & $0.240$ $(-0.135)^{*}$ 
                      & $0.145{\scriptstyle\pm0.000}$ $(-0.040)^{*}$ 
                      & $0.105$ \\
    \cmidrule(l){2-7}
      & \multirow{3}{*}{Copula}
        & $D_{train}$  & $\mathbf{0.088}{\scriptstyle\pm0.000}$ $(+0.016)$ 
                      & $0.240$ $(-0.135)^{*}$ 
                      & $0.138{\scriptstyle\pm0.000}$ $(-0.033)^{*}$ 
                      & $0.105$ \\
      &                           & $D_{small}$  & $\mathbf{0.087}{\scriptstyle\pm0.001}$ $(+0.018)$ 
                      & $0.240$ $(-0.135)^{*}$ 
                      & $0.413{\scriptstyle\pm0.001}$ $(-0.308)^{*}$ 
                      & $0.105$ \\
      &                           & $D_{oracle}$ & $\mathbf{0.094}{\scriptstyle\pm0.000}$ $(+0.011)$ 
                      & $0.240$ $(-0.135)^{*}$ 
                      & $0.192{\scriptstyle\pm0.000}$ $(-0.087)^{*}$ 
                      & $0.105$ \\
    \cmidrule(l){2-7}
      & \multirow{3}{*}{CTGAN}
        & $D_{train}$  & $\mathbf{0.090}{\scriptstyle\pm0.000}$ $(+0.014)$ 
                      & $0.240$ $(-0.135)^{*}$ 
                      & $0.111{\scriptstyle\pm0.001}$ $(-0.007)$ 
                      & $0.105$ \\
      &                           & $D_{small}$  & $\mathbf{0.083}{\scriptstyle\pm0.002}$ $(+0.016)$ 
                      & $0.240$ $(-0.135)^{*}$ 
                      & $0.465{\scriptstyle\pm0.001}$ $(-0.361)^{*}$ 
                      & $0.105$ \\
      &                           & $D_{oracle}$ & $\mathbf{0.095}{\scriptstyle\pm0.000}$ $(+0.010)$ 
                      & $0.240$ $(-0.135)^{*}$ 
                      & $0.141{\scriptstyle\pm0.000}$ $(-0.036)^{*}$ 
                      & $0.105$ \\
    \midrule
    \multirow{9}{*}{GB}
      & \multirow{3}{*}{GC}
        & $D_{train}$  & $\mathbf{0.068}{\scriptstyle\pm0.000}$ $(+0.028)$ 
                      & $0.208$ $(-0.111)^{*}$ 
                      & $0.152{\scriptstyle\pm0.001}$ $(-0.055)^{*}$ 
                      & $0.097$ \\
      &                           & $D_{small}$  & $\mathbf{0.082}{\scriptstyle\pm0.000}$ $(+0.015)$ 
                      & $0.208$ $(-0.111)^{*}$ 
                      & $0.352{\scriptstyle\pm0.000}$ $(-0.255)^{*}$ 
                      & $0.097$ \\
      &                           & $D_{oracle}$ & $\mathbf{0.080}{\scriptstyle\pm0.000}$ $(+0.016)$ 
                      & $0.208$ $(-0.111)^{*}$ 
                      & $0.181{\scriptstyle\pm0.001}$ $(-0.084)^{*}$ 
                      & $0.097$ \\
    \cmidrule(l){2-7}
      & \multirow{3}{*}{Copula}
        & $D_{train}$  & $\mathbf{0.074}{\scriptstyle\pm0.000}$ $(+0.023)$ 
                      & $0.208$ $(-0.111)^{*}$ 
                      & $0.139{\scriptstyle\pm0.001}$ $(-0.043)^{*}$ 
                      & $0.097$ \\
      &                           & $D_{small}$  & $\mathbf{0.070}{\scriptstyle\pm0.000}$ $(+0.027)$ 
                      & $0.208$ $(-0.111)^{*}$ 
                      & $0.419{\scriptstyle\pm0.001}$ $(-0.322)^{*}$ 
                      & $0.097$ \\
      &                           & $D_{oracle}$ & $\mathbf{0.084}{\scriptstyle\pm0.000}$ $(+0.012)$ 
                      & $0.208$ $(-0.111)^{*}$ 
                      & $0.195{\scriptstyle\pm0.000}$ $(-0.098)^{*}$ 
                      & $0.097$ \\
    \cmidrule(l){2-7}
      & \multirow{3}{*}{CTGAN}
        & $D_{train}$  & $\mathbf{0.080}{\scriptstyle\pm0.000}$ $(+0.017)$ 
                      & $0.208$ $(-0.111)^{*}$ 
                      & $0.116{\scriptstyle\pm0.000}$ $(-0.019)^{*}$ 
                      & $0.097$ \\
      &                           & $D_{small}$  & $\mathbf{0.078}{\scriptstyle\pm0.000}$ $(+0.019)$ 
                      & $0.208$ $(-0.111)^{*}$ 
                      & $0.483{\scriptstyle\pm0.000}$ $(-0.386)^{*}$ 
                      & $0.097$ \\
      &                           & $D_{oracle}$ & $\mathbf{0.084}{\scriptstyle\pm0.000}$ $(+0.013)$ 
                      & $0.208$ $(-0.111)^{*}$ 
                      & $0.143{\scriptstyle\pm0.000}$ $(-0.047)^{*}$ 
                      & $0.097$ \\
    \midrule
    \multirow{9}{*}{LDA}
      & \multirow{3}{*}{GC}
        & $D_{train}$  & $\mathbf{0.071}{\scriptstyle\pm0.000}$ $(+0.038)$ 
                      & $0.228$ $(-0.119)^{*}$ 
                      & $0.143{\scriptstyle\pm0.000}$ $(-0.034)$ 
                      & $0.109$ \\
      &                           & $D_{small}$  & $\mathbf{0.086}{\scriptstyle\pm0.000}$ $(+0.023)$ 
                      & $0.228$ $(-0.119)^{*}$ 
                      & $0.373{\scriptstyle\pm0.000}$ $(-0.264)^{*}$ 
                      & $0.109$ \\
      &                           & $D_{oracle}$ & $\mathbf{0.085}{\scriptstyle\pm0.000}$ $(+0.024)$ 
                      & $0.228$ $(-0.119)^{*}$ 
                      & $0.158{\scriptstyle\pm0.001}$ $(-0.049)^{*}$ 
                      & $0.109$ \\
    \cmidrule(l){2-7}
      & \multirow{3}{*}{Copula}
        & $D_{train}$  & $\mathbf{0.084}{\scriptstyle\pm0.002}$ $(+0.025)$ 
                      & $0.228$ $(-0.119)^{*}$ 
                      & $0.148{\scriptstyle\pm0.000}$ $(-0.039)^{*}$ 
                      & $0.109$ \\
      &                           & $D_{small}$  & $\mathbf{0.085}{\scriptstyle\pm0.003}$ $(+0.024)$ 
                      & $0.228$ $(-0.119)^{*}$ 
                      & $0.419{\scriptstyle\pm0.000}$ $(-0.310)^{*}$ 
                      & $0.109$ \\
      &                           & $D_{oracle}$ & $\mathbf{0.089}{\scriptstyle\pm0.000}$ $(+0.020)$ 
                      & $0.228$ $(-0.119)^{*}$ 
                      & $0.200{\scriptstyle\pm0.000}$ $(-0.092)^{*}$ 
                      & $0.109$ \\
    \cmidrule(l){2-7}
      & \multirow{3}{*}{CTGAN}
        & $D_{train}$  & $\mathbf{0.089}{\scriptstyle\pm0.000}$ $(+0.020)$ 
                      & $0.228$ $(-0.119)^{*}$ 
                      & $0.146{\scriptstyle\pm0.001}$ $(-0.037)^{*}$ 
                      & $0.109$ \\
      &                           & $D_{small}$  & $\mathbf{0.085}{\scriptstyle\pm0.000}$ $(+0.024)$ 
                      & $0.228$ $(-0.119)^{*}$ 
                      & $0.474{\scriptstyle\pm0.000}$ $(-0.365)^{*}$ 
                      & $0.109$ \\
      &                           & $D_{oracle}$ & $\mathbf{0.090}{\scriptstyle\pm0.001}$ $(+0.019)$ 
                      & $0.228$ $(-0.119)^{*}$ 
                      & $0.121{\scriptstyle\pm0.000}$ $(-0.012)$ 
                      & $0.109$ \\
    \bottomrule
  \end{tabular}
\end{table}

 Table~\ref{table:th2_generator} presents the results. Despite the expected degradation in generator quality from $D_{\mathrm{oracle}}$ to $D_{\mathrm{small}}$, OSYN maintains stable performance across all settings. Notably, for Gaussian Copula, the lower-bound value estimated using $D_{\mathrm{small}}$ is even closer to the true loss than when trained on larger datasets. This indicates that OSYN can compensate for reduced generator performance, further supporting its robustness.

\begin{figure}[!h]
  \centering
  \begin{subfigure}[t]{0.32\textwidth}
    \centering
    \includegraphics[width=\linewidth]{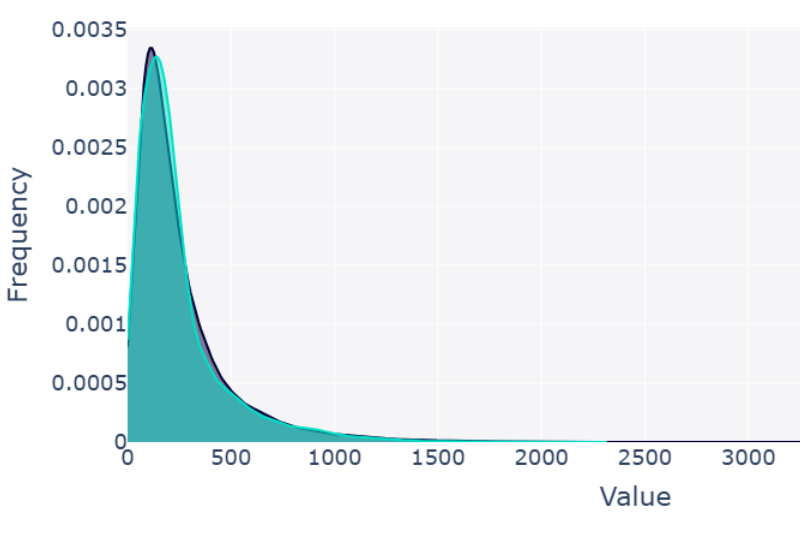} 
    \caption{$D_{\mathrm{train}}$}
  \end{subfigure}\hfill
  \begin{subfigure}[t]{0.32\textwidth}
    \centering
    \includegraphics[width=\linewidth]{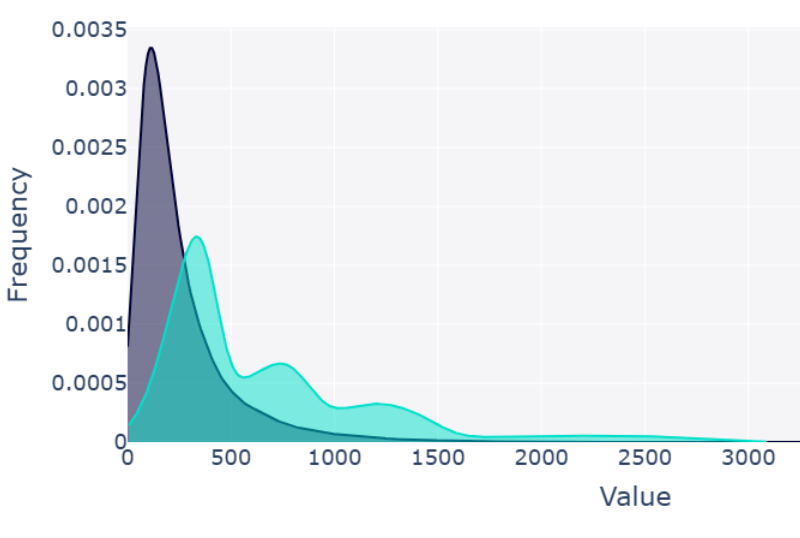} 
    \caption{$D_{\mathrm{small}}$}
  \end{subfigure}\hfill
  \begin{subfigure}[t]{0.32\textwidth}
    \centering
    \includegraphics[width=\linewidth]{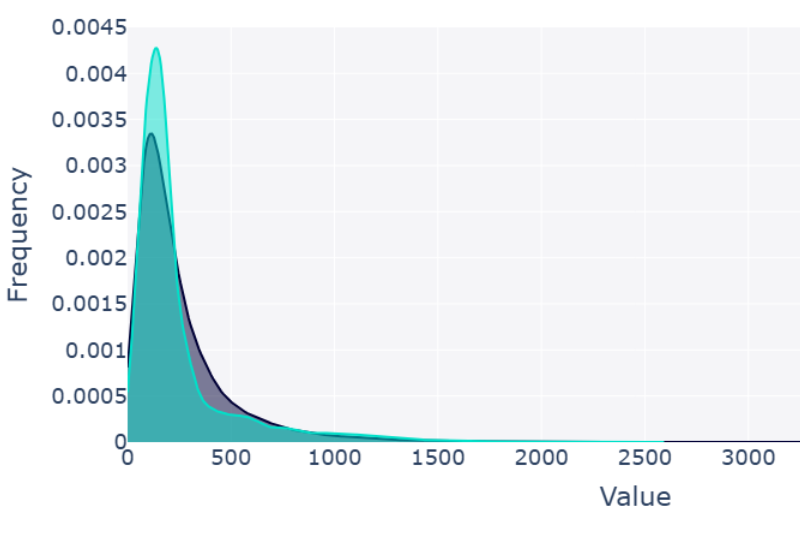} 
    \caption{$D_{\mathrm{oracle}}$}
  \end{subfigure}
  \caption{Distribution of the synthetic data (cyan) from CTGAN with different training conditions vs.\ the oracle data (gray) for the most important feature \emph{``duration''}.}
  \label{fig: gb_explain}
\end{figure}

 Figure~\ref{fig: gb_explain} provides a qualitative illustration of the data distribution learned by CTGAN across the three training sets. The CTGAN model trained on $D_{\mathrm{train}}$ best replicates the oracle distribution, while the model trained on $D_{\mathrm{small}}$ exhibits a clear shift. Nonetheless, the corresponding lower bound values under OSYN remain closely aligned.

\subsection{Computational cost}

This section presents the computational cost of OSYN. We evaluate the cost using three metrics:
\begin{enumerate}
    \item Wall-clock time (min): the total elapsed time of the algorithm;
    \item Peak CPU memory (GB): the maximum RAM usage during execution, recorded using the \texttt{psutil} library;
    \item Peak GPU memory (GB): the maximum VRAM usage during execution, recorded via the NVIDIA Management Library (\texttt{pynvml}).
\end{enumerate}

\begin{table}[tp]
\centering
\caption{Computational cost of OSYN compared to baselines. }
\label{tab:comp_cost}
\begin{tabular}{l l r c c}
\toprule
\textbf{K} & \textbf{Method} & \textbf{Wall time (min)} & \textbf{Peak CPU (GB)} & \textbf{Peak GPU (GB)} \\
\midrule
500  & OSYN & 180.416 & 1.536 & 1.926 \\
     & Bootstrap  & 0.082   & 0.738 & 0.361 \\
     & Syn-Wo-Opt  & 0.024   & 0.988 & 0.418 \\
\midrule
1000 & OSYN & 219.157 & 1.490 & 1.926 \\
     & Bootstrap & 0.095   & 0.745 & 0.361 \\
     & Syn-Wo-Opt  & 0.022   & 0.976 & 0.418 \\
\midrule
5000 & OSYN & 170.252 & 1.998 & 1.965 \\
     & Bootstrap  & 0.118   & 0.745 & 0.361 \\
     & Syn-Wo-Opt  & 0.026   & 0.976 & 0.418 \\
\bottomrule
\end{tabular}
\end{table}

 All runs use the ADULT dataset with an SVM classifier on a single NVIDIA T4 GPU. For OSYN and baselines,  the measurement was performed with the same settings in Subsection~\ref{sil-ass-qua-model-qua}. We vary the size of set $\mS$ from 500, 1000, to 5000 to simulate small, medium, and large datasets (Table~\ref{tab:comp_cost}). The results show that OSYN incurs considerably higher computational cost—especially in wall-clock time—than Bootstrap and Syn-wo-Opt. In this experiment, GPU usage in OSYN mainly comes from partitioning synthetic points with FAISS~\cite{douze2024faiss}, while the optimization step remains CPU-bound because scikit-learn classifiers (e.g., SVM, logistic regression) do not support GPU acceleration. Here, as $K$ grows, GPU memory increases only marginally, but with higher-dimensional data and deep learning models, GPU cost would become much more significant. This underscores the need to balance computational resources and performance gains when applying OSYN to large-scale tasks.
\end{appendices}

\bibliography{sn-bibliography}

\end{document}